\numberwithin{equation}{section}
\newtheorem{theorem}{Theorem}[section]
\newtheorem{corollary}[theorem]{Corollary}
\newtheorem{lemma}[theorem]{Lemma}
\declaretheorem[style=definition,qed=$\lrcorner$,numberwithin=section]{example}
\newcommand{\uend}{\hfill$\lrcorner$}
\newcounter{claimcounter}
\newenvironment{claim}[1][]{
  \renewcommand{\proof}{\smallskip\par\noindent\textit{Proof. }}
  \medskip\par\noindent%
  \ifthenelse{\equal{#1}{}}{%
    \setcounter{claimcounter}{0}\refstepcounter{claimcounter}\textit{Claim~\arabic{claimcounter}.}
  }{%
    \ifthenelse{\equal{#1}{resume}}{%
      \refstepcounter{claimcounter}\textit{Claim~\arabic{claimcounter}.}
    }{%
      \textit{Claim~#1.}
    }
  }
}{
  \par\medskip
}
\newcommand{\tp}{\operatorname{tp}}
\newcommand{\ltp}{\operatorname{ltp}}
\newcommand{\RR}{\mathbb R}
\newcommand{\NN}{\mathbb N}
\newcommand{\logic}[1]{\text{\upshape #1}}
\newcommand{\LL}{\logic L}
\newcommand{\FO}{\logic{FO}}
\newcommand{\smid}{\mathbin{;}}
\newcommand{\bigmid}{\mathbin{\big|}}
\renewcommand{\phi}{\varphi}
\newcommand{\err}{\operatorname{err}}
\newcommand{\CD}{{\mathcal D}}
\newcommand{\CC}{{\mathcal C}}
\newcommand{\CH}{{\mathcal H}}
\newcommand{\CT}{{\mathcal T}}
\newcommand{\BU}{{\mathbb U}}
\renewcommand{\theta}{\vartheta}
\newcommand{\ceil}[1]{\left\lceil#1\right\rceil}
\newcommand{\dist}{\operatorname{dist}}
\newcommand{\VC}{\operatorname{VC}}
\newcommand{\model}[4]{\llbracket#2(#3\smid#4)\rrbracket^B}
\begin{document}
\title{Learning first-order definable concepts over structures of small degree}

\author{\large Martin Grohe\\\normalsize RWTH Aachen
  University\\[-0.8ex]\normalsize grohe@informatik.rwth-aachen.de
\and \large Martin Ritzert\\\normalsize RWTH Aachen
  University\\[-0.8ex]\normalsize ritzert@informatik.rwth-aachen.de}
\date{}

\maketitle
\begin{abstract}
  We consider a declarative framework for machine learning where
  concepts and hypotheses are defined by formulas of a logic over some
  ``background structure''. We show that within this framework,
  concepts defined by first-order formulas over a background structure
  of at most polylogarithmic degree can be learned in polylogarithmic
  time in the ``probably approximately correct'' learning sense.
\end{abstract}

\section{Introduction}
This paper studies, from a theoretical perspective, a role that logic
might play as the foundation of a more declarative approach to machine
learning.  Machine learning algorithms produce a \emph{hypothesis} $H$
about some unknown \emph{target function} $C^*$ defined on an
\emph{instance space} ${\BU}$. In a supervised learning setting, the
input of a learning algorithm (the ``data'') consists of a sequence of
\emph{labelled examples}, that is, instances $u\in {\BU}$ labelled by
the value $C^*(u)$. The quality of the hypothesis $H$ is measured in
terms of how well it \emph{generalises}, that is, predicts correct
values of the target function on new data items.  In this paper, we
focus on \emph{Boolean classification} problems, where the target
function has the range $\{0,1\}$. In this case, we usually speak of a
\emph{target concept}. We also consider a setting where the target
concept is not deterministic, but a random variable.

The type of hypothesis we get is determined by the learning algorithm
we use. For example, if we use support vector machines, the hypothesis
is a linear halfspace of the instance space\footnote{We assume the
  instance space is $\RR^\ell$ for some $\ell$ and the hypothesis is a
  halfspace determined by a hyperplane.}, if we use decision tree
learning, then the hypothesis is a decision tree, and if we use deep
learning the hypothesis is specified by the weights and structure of a neural
network. The natural workflow would be to first decide on a
\emph{model} of how the target concept might look, or rather, what
kind of hypothesis might be appropriate. Then the learning algorithm
solves an optimisation problem by choosing the \emph{parameters} of
the model in such a way that they fit the data. For example, if
the instance space is $\RR^\ell$ and we choose a linear model, the
parameters of the model consist of a vector $\boldsymbol a\in\RR^\ell$
and a number $b\in\RR$, specifying the hyperplane $\{\boldsymbol
u\in\RR^\ell\mid \boldsymbol a\cdot\boldsymbol u-b\ge 0\}$. Then we
may choose an algorithm such as support vector
machine\footnote{Arguably, we could also call the support vector
  machine the ``model'' and the solver for the quadratic optimisation
  system behind it the ``algorithm''.} or the
perceptron algorithm for computing the parameters. %

From a declarative viewpoint, it seems desirable to separate the
choice of the model from the choice of the algorithm. Then as
logicians, we will ask which language we best use to describe the
model. A natural and very flexible framework to do this is the
following. We first choose a \emph{background structure} $B$. For
example, if we have numerical data, $B$ may be the the field of reals,
possibly expanded additional functions like the sigmoid function
$x\mapsto\frac{1}{1+e^{-x}}$. If we have graph data, our background
structure may be a finite labelled
graph. %
Given the background structure, we can specify a parametric model by a
formula $\phi(\bar x\smid\bar y)$ of some logic $\LL$, for example
first-order logic ($\FO$). This formula has two types of free
variables, the \emph{instance variables} $\bar x=(x_1,\ldots,x_k)$ and
the \emph{parameter variables} $\bar y=(y_1,\ldots,y_\ell)$. The
instance space ${\BU}$ of our model is $U(B)^k$, where $U(B)$ denotes
the universe of our background structure $B$. For each choice
$\bar v\in U(B)^\ell$ of parameters, the formula defines a function
$\llbracket \phi(\bar x\smid\bar v)\rrbracket^B:U(B)^k\to\{0,1\}$ by
\begin{equation}\label{eq:model}
\llbracket \phi(\bar x\smid\bar v)\rrbracket^B(\bar u):=
\begin{cases}
  1&\text{if } B\models\phi(\bar u\smid\bar v),\\
  0&\text{otherwise},
\end{cases}
\end{equation}
which we regard as a concept or hypothesis over our instance
space. Here $B\models\phi(\bar u\smid\bar v)$ means that $B$ satisfies
$\phi$ if the variables $\bar x$ are interpreted by the values
$\bar u$ and the variables $\bar y$ by the values $\bar v$. Depending
on the context, we often call $\llbracket \phi(\bar x\smid\bar
v)\rrbracket^B$ an $\LL$-definable model or hypothesis.

\begin{example}\label{exa:1}
  Let $B$ be an $\{E,R\}$-structure, where $E$ is a binary and $R$ a
  unary relation symbol. $B$ may be viewed as a directed graph in which some
  vertices are coloured red. Consider, for example, the graph shown in
  Figure~\ref{fig:exa1}. 

  As input for a learning algorithm, we receive a training sequence
  consisting of some vertices labelled $0$ or $1$. In our example,
  this may be the sequence $\big((a,0),(b,1),(g,0),(k,1)\big)$. From
  these training examples, we
  are supposed to figure out a global labelling function.

  Consider the first order formula
  \begin{align*}
  \phi(x\smid y_1,y_2):=&\big(R(x)\vee x=y_1\vee
  E(x,y_1)\big)\\
    &\wedge\neg\exists z(E(y_2,z)\wedge E(z,x)).
  \end{align*}
  If we take as parameters $v_1:=j,v_2:=e$, then the hypothesis $\model
  B\phi x{v_1,v_2}$ is consistent with the training examples.
\end{example}

\begin{figure}[t]
  \centering
  \begin{tikzpicture}
    [
    wvert/.style={circle,draw,minimum size=5mm,inner sep=0mm},
    rvert/.style={circle,draw,white,fill=red,minimum size=5mm,inner sep=0mm}
    ]
    \path (2,6) node[rvert] (a) {$a$} node[above=2mm] {0};
    \path (1,4.5) node[rvert] (b) {$b$} node[above left=1.5mm] {1};
    \node[rvert] (c) at (3,4.5) {$c$};
    \node[wvert] (d) at (0,3) {$d$};
    \node[rvert] (e) at (2,3) {$e$};
    \node[wvert] (f) at (4,3) {$f$};
    \path (0.5,1.5) node[wvert] (g)  {$g$} node[right=2mm] {0};
    \path (2,1.5) node[wvert] (h)  {$h$};
    \node[rvert] (i) at (4,1.5) {$i$};
    \node[wvert] (j) at (1,0) {$j$};
    \path (3,0) node[wvert] (k)  {$k$} node[right=2mm] {1};

    \draw[->,thick] (b) edge (a) edge[bend left] (e) edge (d)
    (c) edge (f) edge (a)
    (d) edge (g) 
    (e) edge[bend left] (b) edge (c) edge (g) edge (i)
    (f) edge (e) edge (i)
    (h) edge (i) edge (e) edge (j)
    (j) edge (g) edge[bend left] (d)
    (k) edge (i) edge (j);
  \end{tikzpicture}
  \caption{Structure of Example~\ref{exa:1} (red vertices appear
    grey in b/w)}
  \label{fig:exa1}
\end{figure}
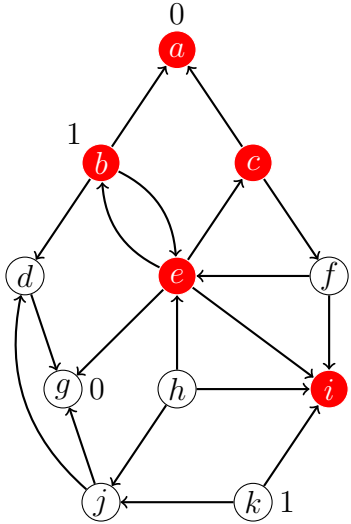

Working in such a rich declarative framework, however, it is easy to get
carried away by the expressiveness it gives. It is important,
therefore, to make sure that the models can still be learned by
efficient algorithms. There are two basic algorithmic problems, which
we may call \emph{parameter learning} (or \emph{parameter estimation})
and \emph{model learning} (or \emph{model estimation}). For both,
assume that we have a background structure $B$ and a logic $\LL$. In
the parameter learning problem, we assume a fixed $\LL$-formula
$\phi(\bar x\smid\bar y)$, and we want to find parameters that fit the
data. In the model learning problem, we are only given the data, and
we want to find an $\LL$-formula $\phi(\bar x\smid\bar y)$ as well as
parameters fitting the data. To avoid overfitting, we want to choose
the formula $\phi(\bar x\smid\bar y)$ to be as simple as possible,
according to some metric (for example, length or quantifier-rank). At
first sight, it seems that the parameter learning problem is the
simpler one, but this is not necessarily the case (as we shall discuss
in Section~\ref{sec:pl}).

These considerations suggest the following research program:
\emph{Identify logics suitable for expressing relevant models for
  machine learning and study their algorithmic learnability, that is,
  efficient learning algorithms and also complexity theoretic or
  information theoretic lower bounds.} Ideally, the logics would be
expressive enough to define all feasible models and at the same time
only permit the definition of feasible models. Note the similarity of
these desiderata with those for database query languages
(e.g.~\cite{chahar82}). The theoretical side of this program may be
viewed as a ``descriptive complexity theory of machine learning'', and
this is where our technical contributions are.

Before we describe our results, let us discuss one more technical
issue. The input of a learning algorithm consists of the training
examples, but in our framework of learning definable concepts the
algorithm also needs access to the background structure $B$. One
possible scenario is that $B$ is a fixed infinite structure, for
example the field of real numbers, and we consider an abstract
computation model where the algorithm can store an element of the
structure in a single memory cell and has access to the operations of
the structure. A second is that $B$ is a finite structure, for example
a graph describing the world wide web or a social network. In this
case, we can simply regard $B$ as part of the input, but we may think
of $B$ as still being too large to fit into main memory and only give
our algorithms limited access to $B$, such as \emph{local access} that only
allows the algorithm to retrieve the neighbours of a vertex that we already know
(we can think of this as being able to follow links). This is the
scenario we consider here.

\subsection{Our results}
We give a learning algorithm for the model learning problem for
first-order logic. The twist of our result is that if the degree of
the background structure is at most polylogarithmic, then 
the algorithm works in sublinear, in fact, polylogarithmic time, in the
size of the background structure. It came as a surprise to us that
this is possible at all. In analysing the algorithm, we take a
data-complexity point of view \cite{var82}, that is, we measure the
running time in terms of the size of the structure and hide the
dependence on the (presumably small) formula in the constants.

We only consider relational structures in this paper.
The \emph{maximum degree} $\Delta(B)$ of a structure $B$ is the maximum degree of its
Gaifman graph, in which two vertices are adjacent if they appear
together in some tuple of some relation of $B$ (see
Section~\ref{sec:log-struc} for details). The \emph{$k$-ary learning
  problem} over a background structure $B$ has instance space
$U(B)^k$. The goal is to learn an unknown target concept $C^*:U(B)^k\to\{0,1\}$.

A \emph{learning algorithm} for the $k$-ary learning problem over some
background structure $B$ receives as input a finite sequence $T$ of
training examples. In addition, we grant our learning
algorithms \emph{local access} to the background structure (in the
sense described above, see Section~\ref{sec:log-struc} for details).
We usually let $t:=|T|$ be the length of the sequence $T$.
The training examples are pairs $(\bar u,C^*(\bar u))$, where
$\bar u\in U(B)^k$. We say that $H:U(B)^k\to\{0,1\}$ is \emph{consistent}
with $T$ if for all $(\bar u,c)\in T$ we have $H(\bar u)=c$.
Given $T$, the learning algorithm is supposed to compute a \emph{hypothesis}
$H:U(B)^k\to\{0,1\}$, which in our setting is always of the form
$\model B\phi{\bar x}{\bar v}$ for some first-order formula $\phi(\bar
x\smid\bar y)$ and parameter tuple $\bar v\in U(B)^\ell$. Of course the algorithm is not supposed to return the
whole set $\model B\phi{\bar x}{\bar v}$, but just the formula
$\phi(\bar x\smid\bar y)$
and the parameter tuple $\bar v$. However, we also allow our learning
algorithms to \emph{reject} an input, to account for the situation
that after seeing the training examples the algorithm realises that its
assumption about the model was wrong, that is, there simply
is no hypothesis of the form $\model B\phi{\bar x}{\bar v}$ consistent
with the training examples.

If a learning algorithm $\mathfrak L$ returns a hypothesis
$\model B\phi{\bar x}{\bar v}$ specified by a formula
$\phi(\bar x\smid\bar y)$ and the parameter tuple $\bar v$, then this
hypothesis is useless if we cannot efficiently determine the value
$\model B\phi{\bar x}{\bar v}(\bar u)$ for a given $\bar u\in U(B)^k$. We say that the
\emph{hypotheses returned by $\mathfrak L$ can be evaluated in time
  $\mathfrak t$} if there is an algorithm that, given a pair $\phi(\bar
x\smid\bar y)$, $\bar v$ returned by $\mathfrak L$ and a tuple $\bar
u\in U(B)^k$, computes $\model B\phi{\bar x}{\bar v}(\bar u)$ in
time $\mathfrak t$.

The goal is to produce a hypothesis $H$ that \emph{generalises} well,
that is, approximates the target concept $C^*$ closely. To capture
theoretically what it means for a hypothesis to generalise well, we
will use the framework of \emph{probably  approximately
  correct} learning. However, let us first state
our main algorithmic result.

\begin{theorem}\label{theo:log}
  Let $k,\ell,q\in\NN$. Then there is a $q^*\in\NN$ and
  a learning algorithm $\mathfrak
  L$ for the $k$-ary learning problem over some finite background structure
  $B$ with the following properties.
  \begin{enumerate}
  \item If the algorithm returns a hypothesis $H$, then
    $H$ is of the form $\model B{\phi^*}{\bar x}{\bar v^*}$ for some
    first-order formula $\phi^*(\bar x\smid\bar y)$ 
    of quantifier rank at most
    $q^*$ and $\bar v^*\in U(B)^\ell$, and $H$ is
    consistent with the input
    sequence $T$ of training examples.
  \item If there is a first-order formula $\phi(\bar x\smid\bar y)$ of quantifier rank 
    $q$ and some tuple $\bar v\in U(B)^\ell$ of parameters such that 
    $\model B\phi{\bar x}{\bar v}$ is consistent with the input 
    sequence $T$, then 
    $\mathfrak L$ always returns a hypothesis and never rejects. 
  \item The algorithm runs in time $(\log n+d+t)^{O(1)}$ with only
    local access to $B$, where
    $n:=|U(B)|$ and $d:=|\Delta(B)|$ and $t:=|T|$.
  \item The hypotheses returned by $\mathfrak L$ can be evaluated in
    time $(\log n+d)^{O(1)}$ with only
    local access to $B$.
  \end{enumerate}
\end{theorem}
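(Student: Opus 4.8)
The plan is to base the algorithm on Gaifman's locality theorem. Every $\FO$-formula of quantifier rank $q$ is equivalent to a Boolean combination of basic local sentences and $r$-local formulas around its free variables, with locality radius $r\le 7^q$ depending only on $q$. In a \emph{fixed} structure $B$ every basic local sentence has a fixed truth value, so each $\phi(\bar x\smid\bar y)$ of rank $q$ collapses in $B$ to an $r$-local formula whose value on a tuple depends only on the isomorphism type of the $r$-ball around that tuple. I would therefore not look for $\phi$ at all, but enumerate all $r$-local formulas $\psi(\bar x\smid\bar y)$ up to rank $q^*=O(7^q)$, the rank needed to relativise quantifiers to $r$-balls and to express rank-$q$ Hintikka types. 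For a fixed signature and fixed $\bar x,\bar y$ there are only finitely many such formulas, a number independent of $n$ and $d$, so this enumeration reduces model learning to parameter learning for local formulas and costs only a constant factor. It also secures property~(4): an $r$-local $\psi$ together with explicit parameters $\bar v^*$ is evaluated on a fresh $\bar u$ by exploring the $r$-balls $N_r(\bar u)$ and $N_r(\bar v^*)$ and reading off the joint local type, which local access performs in $(\log n+d)^{O(1)}$ time.

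The computational ingredient behind property~(3) is local exploration. Since $\Delta(B)\le d$, the $r$-ball around any tuple of arity $k+\ell$ has at most $(k+\ell)\,d^{\,r}$ elements, and local access reconstructs its induced substructure, hence its $r$-local type, by following that many edges. For the constant $r=7^q$ this is $(\log n+d)^{O(1)}$ per tuple, so exploring the neighbourhoods of all $t$ training examples and recording their local types stays within the $(\log n+d+t)^{O(1)}$ budget.

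The heart of the argument, and the step I expect to be the main obstacle, is the parameter search, because local access reaches only a bounded region around the examples whereas a fitting hypothesis may place parameters arbitrarily far away. I would split each parameter component according to whether it lies \emph{close} to some example (within distance $2r$, hence inside the explored region) or \emph{far} from every example. Close components are found by brute force over the polynomially many explored candidates. A far component has its $r$-ball disjoint from every example's, so by the Feferman--Vaught composition property it can influence the value of the rank-$q$ formula on the examples only through its \emph{rank-$q$ type}, of which there are finitely many. Thus the value it forces is one of finitely many possibilities, and the behaviour of $\model B\phi{\bar x}{\bar v}$ on all tuples far from that parameter coincides with that of a suitable rank-$q^*$ $r$-local formula using the close parameters alone. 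This \emph{absorbs} the far parameter into the formula, explaining why the output rank $q^*$ may exceed $q$ and removing any need to locate far parameters explicitly, which local access could not do in polylogarithmic time.

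Assembling these pieces, the algorithm ranges over the constantly many $r$-local formulas of rank at most $q^*$ and, for each, over the polynomially many tuples of close parameters drawn from the explored region; it returns the first pair whose local evaluation is consistent with $T$, and rejects if none is. Property~(1) is immediate, as every returned hypothesis is by construction an $r$-local formula of rank at most $q^*$ that has passed the consistency test. For property~(2), if some $\phi(\bar x\smid\bar y)$ of rank $q$ with parameters $\bar v$ fits $T$, then Gaifman's theorem together with the absorption of the far components' rank-$q$ types exhibits one of the enumerated local formulas, with suitable close parameters, agreeing with $\model B\phi{\bar x}{\bar v}$ on every training example; hence the search succeeds and the algorithm does not reject. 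The crux throughout is the handling of far parameters under local access, resolved by the absorption step at the cost of raising the quantifier rank from $q$ to $q^*$.
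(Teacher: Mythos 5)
Your overall strategy is the same as the paper's: use Gaifman locality to replace the unknown rank-$q$ formula by a syntactically local formula of some larger rank $q^*$, split the parameters into those near the training examples (found by brute force in the explored region) and those far away (absorbed into the formula via their local type using Feferman--Vaught composition), and then search exhaustively over the finitely many local formulas and the polynomially many candidate parameter tuples. Properties (1), (3) and (4) are handled essentially as in the paper's proof.

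There is, however, a genuine gap in the step you yourself identify as the crux. You declare a parameter component \emph{close} if it lies within distance $2r$ of some example and \emph{far} otherwise, and you justify the absorption of far components by noting that a far component's $r$-ball is disjoint from every example's $r$-ball. That is not the disjointness the composition argument needs: to combine the local type of $(\bar u_i,\bar v^\circ)$ with the local type of the far block $\bar v^\bullet$, the $r$-neighbourhood of $\bar v^\bullet$ must be disjoint from the $r$-neighbourhood of the \emph{whole} tuple $(\bar u_i,\bar v^\circ)$, hence also from the $r$-balls of the close parameters. With your dichotomy this can fail: take $v_1$ at distance $1.5r$ from an example and $v_2$ at distance $1.5r$ from $v_1$ but at distance about $3r$ from every example. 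Then $v_2$ is ``far'' in your sense, yet $N_r(v_2)\cap N_r(v_1)\neq\emptyset$, so $v_2$ cannot be absorbed into the formula; and it also lies outside the region $N_{2r}(T)$ over which you search, so the brute force cannot find it either. The repair is an iterative closure: repeatedly declare a parameter close if its $r$-ball meets the union of the $r$-balls of the examples and of the parameters already declared close. Since there are only $\ell$ parameters this stabilises after at most $\ell$ rounds, every close parameter then lies within distance $2\ell r$ of the examples, and the residual far block genuinely has its $r$-neighbourhood disjoint from the entire absorbed region, so composition applies. Accordingly the parameter search must range over $N_{2\ell r}(T)^\ell$ rather than $N_{2r}(T)^\ell$; this set is still of size $(t+d)^{O(1)}$, so your running-time analysis is unaffected.
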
	

Note that if the maximum degree $d$ and the length $t$ of the training
sequence are polylogarithmic in $n$, then
the overall running time of the algorithm is polylogarithmic in $n$.

The proof of Theorem~\ref{theo:log} critically relies on the locality
of first-order logic.

We also prove a generalisation of Theorem~\ref{theo:log}
(Theorem~\ref{theo:log-err}), where instead of insisting on a
consistent hypothesis, we compute, within the same polylogarithmic
time bound, a hypothesis that minimises the training error. The
\emph{training error} of a concept or hypothesis is the fraction of
training examples on which it is wrong. The algorithm we obtain
returns a hypothesis $\model B{\phi^*}{\bar x}{\bar v^*}$ for a formula
$\phi^*(\bar x\smid\bar y)$ of quantifier rank $q^*$ bounded in terms
of $q,k,\ell$ that matches the training error of the best
$\model B\phi{\bar x}{\bar v}$ for formulas $\phi$ of quantifier rank
$q$.

A variant of Theorem~\ref{theo:log}
(Theorem~\ref{theo:uni}) for bounded degree background structures
even applies to infinite structures. For this to work, we need
another level of abstraction in our computation model: we allow
it to store an element of the structure in a single memory cell and
to access it in a single computation step. We call this the \emph{uniform
  cost} measure. This is in line with the standard uniform-cost RAM
model that is also underlying the analysis of algorithmic meta
theorems for bounded degree
graphs~\cite{Seese96lineartime,gro01a,kazseg11,durschseg14,seg15}. Under
the uniform cost measure, we even obtain a learning algorithm (in fact
the same algorithm $\mathfrak L$ as in Theorem~\ref{theo:log}) running
in time $(d+t)^{O(1)}$ and producing hypotheses that can be evaluated
in time $d^{O(1)}$.

Let us briefly discuss  the implications of our results in Valiant's \cite{valiant1984theory} framework of \emph{probably approximately correct
 (PAC)} learning. A detailed technical discussion and a precise
statement of our results follows in
Section~\ref{sec:pac}. The basic assumption of PAC-learning is that there is an unknown probability distribution on the instance space
 and that instances are drawn independently from this distribution;
 the training instances as well as the new instances that we want to
 classify with our hypothesis. The \emph{generalisation error} of a
 hypothesis is then defined as the probability that the hypothesis is
 wrong on an instance drawn randomly from the distribution. A \emph{PAC-learning algorithm}
 is supposed to generate, with high \emph{confidence} $1-\delta$, a hypothesis with a small generalisation
 error $\epsilon$. The confidence is the probability taken over the randomly
 chosen training examples that the algorithm succeeds. The number $t$ of
 training examples the algorithm has access to is bounded in terms of
 the error parameter $\epsilon$ and the confidence parameter $\delta$,
 and the running time is supposed to be polynomial in the size of its
 input, which in our setting is $t\log n$, or just $t$ under the uniform
 cost measure. To obtain a PAC-learning algorithm, we usually need to
 make assumptions about the target concept. We say that a class $\CC$
 of concepts is \emph{PAC-learnable} if there is a learning algorithm
 that meets the PAC-criterion whenever the target concept is from
 $\CC$, regardless of the probability distribution.

 \begin{corollary}\label{cor:pac}
   Let $d,k,\ell,q\in\NN$.
   For a background structure $B$ of maximum degree at most $d$, let
   $\CC$ be the class of all concepts $\llbracket\phi(\bar
   x\smid\bar v)\rrbracket^B$, where $\phi(\bar x\smid\bar y)$ is a
   first-order formula 
    of quantifier rank at most
    $q$ with $|\bar x|=k$, $|\bar y|=\ell$ and $\bar v\in U(B)^\ell$.

    Then $\CC$ is PAC-learnable by
    an algorithm that only has local access to $B$ and runs in time
    polynomial in $1/\epsilon$ and $\log1/\delta$.
 \end{corollary}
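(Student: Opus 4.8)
The plan is to derive the corollary from Theorem~\ref{theo:log} by combining its role as an efficient \emph{consistent} learner with the classical sample-complexity machinery of PAC learning, which upgrades any consistent learner over a hypothesis class of bounded VC dimension into a PAC learner. Let $q^*$ be the quantifier rank that Theorem~\ref{theo:log} produces for the parameters $k,\ell,q$, and let $\CC^*$ be the class of all concepts $\llbracket\psi(\bar x\smid\bar w)\rrbracket^B$ with $\psi$ of quantifier rank at most $q^*$, $|\bar x|=k$, $|\bar w|=\ell$, and $\bar w\in U(B)^\ell$. Since a formula of rank $q$ also has rank at most $q^*$, we have $\CC\subseteq\CC^*$, so $\CC^*$ is the class in which the learner's outputs live, and any target $C^*\in\CC$ is realisable in $\CC^*$.

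The first and, I expect, the principal step is to bound $\VC(\CC^*)$ by a constant $D=D(d,k,\ell,q^*)$ that does \emph{not} depend on $n=|U(B)|$. This is exactly where locality enters. By the Gaifman/Hanf locality of first-order logic on structures of degree at most $d$, there is a radius $r=r(q^*)$ such that whether $B\models\psi(\bar u\smid\bar w)$ holds is determined by the isomorphism type of the radius-$r$ neighbourhood of the tuple $\bar u\bar w$ together with a bounded amount of global Hanf-count information about $B$ (which, for fixed $B$, is the same across all instances). Because $\Delta(B)\le d$, there are only constantly many such local types, and hence only constantly many distinct ``local behaviours'' a concept in $\CC^*$ can exhibit. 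Counting these configurations caps the size of any shattered set by a constant depending only on $d,k,\ell,q^*$. The delicate point, and the main obstacle, is to separate the contribution of parameters lying far from a given $\bar u_i$---where membership is governed solely by the local type of $\bar u_i$ and the fixed global data, hence cannot be toggled independently---from the bounded-size ball around the parameters, and thereby to argue that only constantly many of the $\bar u_i$ can be independently switched on and off; the algorithmic content is already supplied by Theorem~\ref{theo:log}, so this VC bound is the genuinely new ingredient.

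Given $\VC(\CC^*)\le D$, the remainder is the routine PAC reduction. The classical sample-complexity theorem for consistent learners (Blumer, Ehrenfeucht, Haussler and Warmuth) provides a sample size
\[
t \;=\; O\!\left(\frac{1}{\epsilon}\Bigl(D\log\tfrac{1}{\epsilon}+\log\tfrac{1}{\delta}\Bigr)\right),
\]
polynomial in $1/\epsilon$ and $\log(1/\delta)$, such that in the realisable case any hypothesis from $\CC^*$ consistent with $t$ i.i.d.\ examples has generalisation error at most $\epsilon$ with confidence at least $1-\delta$. The learning algorithm therefore draws $t$ labelled examples and runs $\mathfrak L$ on them: since $C^*\in\CC\subseteq\CC^*$ witnesses a rank-$q$ concept consistent with the sample, part~(2) of Theorem~\ref{theo:log} guarantees that $\mathfrak L$ never rejects and returns a hypothesis $H\in\CC^*$, which by part~(1) is consistent with the sample, so the VC bound yields $\err(H)\le\epsilon$ with probability $\ge 1-\delta$. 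Finally, by part~(3) of Theorem~\ref{theo:log} the algorithm runs in time $(\log n+d+t)^{O(1)}$ with local access; treating the degree bound $d$ as a fixed constant and substituting the value of $t$, this is polynomial in $1/\epsilon$, $\log(1/\delta)$ and $\log n$, the factor $\log n$ being only the per-element encoding cost of the input, and it reduces to a bound polynomial in $1/\epsilon$ and $\log(1/\delta)$ alone under the uniform-cost measure; evaluation of $H$ on new instances is handled by part~(4).
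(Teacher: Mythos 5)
Your argument follows the paper's proof essentially verbatim: the corollary is obtained by combining the consistent learner of Theorem~\ref{theo:log} (in its uniform-cost form, Theorem~\ref{theo:uni}) with the Blumer--Ehrenfeucht--Haussler--Warmuth sample-size bound and a constant bound on the VC dimension of first-order definable concept classes over bounded-degree structures. The only difference is that the paper simply cites that VC bound from Grohe and Tur\'an (Lemma~\ref{lem:vc-bounded-deg}) rather than reproving it, so the locality/counting argument you flag as ``the genuinely new ingredient'' is in fact an imported black box and need not be carried out.
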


Theorem~\ref{theo:uni} is a more detailed statement of this result. In addition to Theorem~\ref{theo:log}
(or Theorem~\ref{theo:uni} for the uniform cost measure), the corollary also relies on a result
from \cite{grohe2004learnability} stating that first-order definable
set families on graphs of bounded degree, such as the family $\CC$
in the corollary, have bounded VC-dimension, which by a general
theorem due to  Blumer, Ehrenfeucht, Haussler and
Warmuth~\cite{bluehrhau+89} implies that a number of training examples
only depending on $\epsilon$ and $\delta$ is sufficient.

We also prove a PAC-learning result for background structures with
polylogarithmic degree (Theorem~\ref{theo:pac-log}) and a
generalisation in the so-called ``agnostic'' PAC-learning framework
which deals with target concepts that are random variables (Theorem~\ref{theo:pac-log-err}).

\subsection{Related Work}
Closest to our framework is that of inductive logic programming (ILP)
(see, for example,
\cite{cohpag95,kiedze94,mug91,mug92,mugder94}). However, there are
important differences. First of all, our framework is by no means
restricted to first-order logic, and in future work we intend to look
at other languages which may be more suitable for expressing concepts
relevant in the machine learning context. However, the present paper
is only concerned with first-order logic. A second difference that is
more significant for this paper is that we represent the background
knowledge in a background structure, whereas in ILP it is represented
by a background theory. This leads to quite different
intuitions. Whereas the scarce positive PAC-learnability results in the
ILP framework are mostly obtained by syntactically restricting the formulas
defining models  (see, however, \cite{hortur01,horslotur03}), our results exploit structural restrictions---small
degree---of the background structure.

To the best of our knowledge, the idea of a learning algorithm having
only local access to the background structure is new here, and this is
precisely what enables the polylogarithmic running time of our
algorithms. We are not aware of results from the ILP context that lead
to algorithms which are sublinear in the background knowledge. The
local access approach seems related to ideas used in property testing
on bounded degree graphs (see, for example,
\cite{golron02,golron11,czushasoh09}). We leave it for future work to
explore possible connections.

Our framework of learning definable concepts over a background
structure has been considered before by Grohe and Tur\'an~\cite{grohe2004learnability}. However,
the results from \cite{grohe2004learnability} are not algorithmic. They only bound the
VC-dimension of definable concept classes over certain classes of
structures. As mentioned above, we use one of the results of \cite{grohe2004learnability}
in the proof of Corollary~\ref{cor:pac}.

An alternative logical learning framework, also with a strong
foundation in descriptive complexity theory, has recently been
proposed by Crouch, Immerman, and Moss~\cite{croimmmos10} and Jordan
and Kaiser~\cite{jorkai16} (also see
\cite{kai12,jorkai13}). Here the goal is to learn a logical
reduction between structures; instances are pairs of structures.
There is is other interesting recent work on learning in a logic setting in database
theory (for example, \cite{aboangpap+13,bonciusta16}) and verification
(for example, \cite{lodmadnei16,garneimadrot16}).
While there is no direct technical connection between our work and
these research directions, they all seem to be similar in
spirit. Exploring the exact technical relations remains future work.

\section{Background from Logic}\label{sec:log}

\subsection{Structures}
\label{sec:log-struc}

In this paper, we only consider relational structures. A
\emph{relational vocabulary} is a finite set $\rho$ of relation
symbols, each with a prescribed \emph{arity}. A
\emph{$\rho$-structure} $A$ consists of a set $U(A)$, the
\emph{universe} of $A$, and for each $k$-ary $R\in\rho$ a $k$-ary
relation $R(A)\subseteq U(A)^k$. A structure $A$ is \emph{finite} if
its universe $U(A)$ is finite, and the \emph{order} of $A$ is
$|A|:=|U(A)|$. (For infinite $A$, we let $|A|:=\infty$.)

The \emph{union} of two $\rho$-structures $A,B$ is the
$\rho$-structure $A\cup B$ with universe $U(A\cup B):=U(A)\cup U(B)$
and relations $R(A\cup B):=R(A)\cup R(B)$ for all $R\in\rho$. The
intersection $A\cap B$ is defined similarly.  A \emph{substructure} of
a $\rho$-structure $A$ is a $\rho$-structure $B$ with
$U(B)\subseteq U(A)$ and $R(B)\subseteq R(A)$ for all $R\in\rho$. For
a subset $V\subseteq U(A)$, the substructure \emph{induced} by $A$ on
$V$ is the the structure $A[V]$ with universe $U(A[V])=V$ and
$R(A[V]):=R(A)\cap V^k$ for each $k$-ary $R\in\rho$.

The \emph{Gaifman graph} of a $\rho$-structure $A$ is the graph $G_A$
with vertex set $V(G_A):=U(A)$ and an edge $uv$ for all $u,v\in U(A)$
such that $u\neq v$ and there is a $k$-ary relation symbol $R\in\rho$
and a tuple $(v_1,\ldots,v_k)\in R(A)$ with
$u,v\in\{v_1,\ldots,v_k\}$. The Gaifman graph allows us to transfer
graph theoretic notions from graphs to arbitrary relational
structures. In particular, the \emph{degree} $\deg^A(u)$ of an element
$u\in U(A)$ is the number of neighbours of $u$ in $G_A$, and the
\emph{maximum degree} $\Delta(A)$ is $\max\{\deg^A(u)\mid u\in U(A)\}$
if this maximum exists and $\infty$ if it does not. The
\emph{distance} $\dist^A(u,v)$ between two elements $u,v\in U(A)$ in
$A$ is the length of the shortest path from $u$ to $v$ in $G_A$, and
$\infty$ if there is no path from $u$ to $v$. If $\dist^A(u,v)=1$ then
we say that $u$ is a \emph{neighbour} of $v$.

The \emph{$r$-neighbourhood} $N_r^A(u)$ of $u$ in $A$ is the set of
all vertices of distance at most $r$ from $u$. For a tuple
$\bar u=(u_1,\ldots,u_k)$, we let
$N_r^A(\bar u):=\bigcup_{i=1}^kN^A_r(u_i)$. To avoid cluttering the
notation even more, we also use $N_r^A(\bar u)$ to denote the induced
substructure $A[N_r^A(\bar u)]$.

In all these notations we omit the superscript ${}^A$ if the structure
$A$ is clear from the context.

Let us briefly review the computation model described in the
introduction. We say that an algorithm $\mathfrak A$ has \emph{local
  access} to a $\rho$-structure $A$ it can query an oracle in the
following two ways.
\begin{description}
\item[\mdseries\slshape  Relation queries:]Is $(u_1,\ldots,u_k)\in R$?
\item[\mdseries\slshape Neighbourhood queries:] Return a list of all
  neighbours for a given $u\in U(A)$.
\end{description}
A relation query requires constant time. A neighbourhood query requires
time proportional to the size of the (representation of the) answer,
which is the degree of $u$ times the space required to store a single
element of $A$. With the \emph{uniform cost measure}, storing an element
of $A$ requires constant space, and with the \emph{logarithmic cost
  measure} it requires space $O(\log|A|)$. Unless explicitly stated
otherwise, we assume the logarithmic cost measure.

\subsection{First-Order Logic}
Let us briefly review the definition of \emph{first-order logic}
$\FO$. \emph{First-order formulas} of vocabulary~$\rho$ are formed
from atomic formulas~$x=y$ and $R(x_1,\ldots,x_k)$, where~$R\in\rho$
is a~$k$-ary relation symbol and~$x,y,x_1,\ldots,x_k$ are variables by
the Boolean connectives~$\neg$~(negation),~$\wedge$ (conjunction),
$\vee$ (disjunction), $\to$ (implication) and existential and universal
quantification~$\exists x,\forall x$, respectively, all with the usual
semantics. The set of all first-order formulas of vocabulary $\rho$ is
denoted by~$\FO[\rho]$. The free variables of a formula are those not
in the scope of a quantifier, and we write~$\phi(x_1,\ldots,x_k)$ to
indicate that the free variables of the formula~$\phi$ are among
$x_1,\ldots,x_k$. A \emph{sentence} is a formula without free
variables. We write $A\models\phi(u_1,\ldots,u_k)$ to denote that $A$
satisfies $\phi$ if $x_i$ is interpreted by $u_i$. As explained in the
introduction, when describing (machine learning) models, we partition
the free variables of a formula into \emph{instance variables} and \emph{parameter
variables}, and we use a semicolon to separate the two parts, as in
$\phi(x_1,\ldots,x_k\smid y_1,\ldots,y_k)$ or
$\phi(\bar x\smid\bar y)$. We use the notation $\llbracket\phi(\bar
x\smid\bar v)\rrbracket^B$ introduced in \eqref{eq:model}
for the instantiation of the model with parameters $\bar v$ in a
background structure $B$.

The \emph{quantifier rank} of a first-order formula
$\phi$ is the nesting depth of quantifiers in $\phi$.

\subsection{Locality}
Let us fix a vocabulary $\rho$.  An $\FO[\rho]$-formula~$\psi(\bar x)$
is \emph{$r$-local} if for all $\rho$-structures $A$ and all tuples
$\bar u$ of elements,
\[
A\models\psi(\bar
u)\iff N_r(\bar u)\models\psi(\bar u).
\]  
A formula is \emph{local} if it is $r$-local for some
$r$. %
For all~$r\ge 0$ there is an $\FO[\rho]$-formula~$\delta_{\le r}(x,y)$
of quantifier rank $O(\log r)$ stating that the distance between~$x$
and~$y$ is at most~$r$. We write~$\delta_{>r}(x,y)$ instead
of~$\neg\delta_{\le r}(x,y)$. A \emph{basic local sentence} of
\emph{radius} $r$ is a first-order sentence of the form
\begin{equation}
  \label{eq:basic-local}
  \exists x_1\ldots\exists
x_k\big(\bigwedge_{1\le i<j\le
  k}\delta_{>2r}(x_i,x_j)\wedge\bigwedge_{i=1}^k\psi(x_i)\big),
\end{equation}
where $\psi$ is~$r$-local.  

\begin{theorem}[Gaifman's Locality Theorem~\cite{gaifman1982local}]
  Every first-order formula  is equivalent to a Boolean combination of
  basic local sentences and local formulas. 
\end{theorem}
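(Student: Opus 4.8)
The plan is to prove the theorem semantically, through Ehrenfeucht--Fra\"iss\'e (EF) games, by establishing a quantitative locality lemma by induction on the quantifier rank $q$ and then reading off the syntactic normal form. Concretely, I would attach to each $q$ a radius $r=r(q)$ that grows singly-exponentially (via a recursion of the form $r(q+1):=3r(q)+1$, the factor $3$ reflecting that basic local sentences enforce pairwise distance $>2r$ and that a fresh point needs one more radius $r$ of room) together with a witness bound $k=k(q)$, and prove: if $A,\bar a$ and $B,\bar b$ satisfy exactly the same $r$-local formulas around the tuples and exactly the same basic local sentences of radius at most $r$ with at most $k$ witnesses, then the Duplicator wins the $q$-round EF game on $(A,\bar a)$ and $(B,\bar b)$, and hence the two structures-with-tuples agree on all first-order formulas of quantifier rank $q$.

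The base case $q=0$ is immediate, since quantifier-free formulas about $\bar a$ are $0$-local. For the inductive step I would analyse a single Duplicator response, splitting on the distance of the Spoiler's new element $c$ (say in $A$) from the tuple $\bar a$, with threshold $2r(q-1)$. If $\dist(c,\bar a)\le 2r(q-1)$, then $c$ and its whole $r(q-1)$-neighbourhood lie inside $N_{r(q)}(\bar a)$ because $3r(q-1)\le r(q)$, so the Duplicator uses the assumed isomorphism of the $r(q)$-neighbourhoods to pick a matching $c'$ near $\bar b$, and the extended tuples retain isomorphic $r(q-1)$-neighbourhoods. If $\dist(c,\bar a)>2r(q-1)$, then $N_{r(q-1)}(c)$ is disjoint from $N_{r(q-1)}(\bar a)$, so $c$ realises some $r(q-1)$-local type $\tau$ far from $\bar a$; a counting argument then shows that agreement on the basic local sentences of radius $r(q-1)$ (with a suitably larger witness bound) forces $B$ to contain an element $c'$ realising $\tau$ and lying $>2r(q-1)$ from $\bar b$. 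In either regime the extended pair again satisfies the hypotheses of the lemma at level $q-1$, so the induction hypothesis settles the remaining $(q-1)$-round game.

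To pass from the game lemma to the statement, I would use that over the fixed finite vocabulary $\rho$ there are, up to logical equivalence, only finitely many $r$-local formulas of quantifier rank at most $q$ and only finitely many basic local sentences of radius at most $r$ with at most $k$ witnesses. The lemma says that the quantifier-rank-$q$ theory of $(A,\bar a)$ is determined by which of these finitely many building blocks hold. Hence a fixed $\phi(\bar x)$ of quantifier rank $q$ is equivalent to the finite disjunction, over all maximal consistent choices of the building blocks that entail $\phi$, of the corresponding conjunctions---a Boolean combination of basic local sentences and local formulas. Since the local formulas $\psi$ occurring inside the basic local sentences in \eqref{eq:basic-local} are themselves $r$-local building blocks, everything stays within the claimed form.

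The main obstacle is the far regime of the inductive step: one must choose the radii and the witness bound so that, after the Duplicator answers, the neighbourhood of the \emph{new} point is simultaneously far enough from the old tuple and separated enough to be counted by basic local sentences, while the type-counting still matches on both sides after one witness is consumed. Making the recursion $r(q+1)=3r(q)+1$ close together with an additive increment in $k(q)$---so that separation by $>2r$ at one level yields usable room and an undiminished count at the next---is the delicate bookkeeping on which the whole argument rests.
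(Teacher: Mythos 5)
The paper does not prove this statement at all: it is Gaifman's classical theorem, cited from \cite{gaifman1982local} and used as a black box (what the paper actually develops on top of it is the syntactic machinery of relativisations and local types). So there is no in-paper proof to compare against. Your Ehrenfeucht--Fra\"iss\'e route is the standard modern proof (as in Libkin's or Ebbinghaus--Flum's textbooks) and is genuinely different from Gaifman's original argument, which proceeds by a syntactic induction on formulas with explicit quantifier manipulation; the game-theoretic version buys a cleaner conceptual structure and the quantitative radius bound $r(q)\approx 3^q$, at the price of the bookkeeping you describe. The overall architecture of your sketch --- induction on quantifier rank, the close/far case split at threshold $2r(q-1)$, and the final passage from ``finitely many building blocks determine the rank-$q$ theory'' to the Boolean-combination normal form --- is sound.

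Two points need repair. First, in the close case you say Duplicator ``uses the assumed isomorphism of the $r(q)$-neighbourhoods,'' but your inductive hypothesis (correctly) only assumes agreement on $r(q)$-\emph{local formulas}. These are not interchangeable: with neighbourhood isomorphism you would be proving Hanf's theorem, whose building blocks are isomorphism types of balls, of which there are infinitely many over unbounded-degree structures --- exactly the failure mode the paper's conclusion warns about --- and your final finiteness argument would collapse. The close case must instead be run through local formulas: the assertion ``there exists $z$ within distance $2r(q-1)$ of $\bar x$ whose extended $(q-1)$-level local type with $\bar x$ is $\theta$'' is itself an $r(q)$-local formula about $\bar x$ of one higher quantifier rank, so you must carry a quantifier-rank budget for the local formulas alongside the radius recursion. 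Second, the far case is asserted rather than proved, and the naive count hiding behind ``a counting argument'' fails: a realisation of $\tau$ lying within distance $2r(q-1)$ of some component $b_j$ of $\bar b$ is \emph{not} thereby prevented from being $(>2r(q-1))$-separated from other such realisations (two points in $N_{2r}(b_j)$ can be at distance up to $4r$), so the number of scattered realisations absorbed near $\bar b$ is not bounded by $\ell$. The standard fix is a dichotomy on whether the maximal scattered set of $\tau$-realisations exceeds a threshold, combined with the observation that the number of scattered realisations \emph{near} the tuple is itself expressible by a local formula about the tuple at radius roughly $3r(q-1)\le r(q)$; only the interplay of these two pieces of agreed-upon information forces a far witness in $B$. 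You correctly identify this as the crux, but as written the step would not go through.
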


This notion of locality defined above is semantical, but we also need a syntactical
version. 
The \emph{radius-$r$ relativisation} of an $\FO[\rho]$-formula
$\phi(x_1,\ldots,x_k)$ is the formula $\phi_{[\le r]}(x_1,\ldots,x_k)$ obtained from $\phi$ by
replacing each subformula $\exists y\psi$ by the formula $
\exists y\left(\bigvee_{i=1}^k\delta_{\le r}(x_i,y)\wedge\psi\right)
$
and every subformula $\forall y\psi$ by $
\forall y\left(\bigvee_{i=1}^k\delta_{\le r}(x_i,y)\to\psi\right).
$
Note that for every $\phi(\bar x)$ the radius-$r$ relativisation
$\phi_{[\le r]}(\bar x)$ is $r$-local. Moreover, if
$\phi(\bar x)$ is $r$-local then $\phi(\bar x)$ and $\phi_{[\le
  r]}(\bar x)$ are equivalent. Note that the transition from
$\phi(\bar x)$ to $\phi_{[\le
  r]}(\bar x)$ increases the quantifier rank by a factor of $O(\log r)$.

An $\FO[\rho]$-formula $\phi(\bar x)$ is \emph{syntactically
  $r$-local} if it is the radius-$r$ relativisation of some
$\FO[\rho]$-formula $\phi'(\bar x)$.
Every syntactically $r$-local formula is $r$-local, and
conversely, every $r$-local formula of quantifier rank
$q$ is equivalent to a syntactically $r$-local formula of quantifier
rank $O(q\cdot\log r)$ (its own radius-$r$ relativisation).

We say that a basic local sentence of the form \eqref{eq:basic-local}
is \emph{syntactically basic local} if the $r$-local formula $\psi(x)$
is syntactically $r$-local.
A formula is in \emph{Gaifman normal form} if it is a  Boolean combination of
  syntactically basic local sentences and syntactically local
  formulas. The \emph{locality radius} of a formula $\phi$ in Gaifman normal
  form is the least $r$ such that all basic local sentences in
  $\phi$ have radius at most $r$ and all  local formulas are
  syntactically $r'$-local form some $r'\le r$.

\subsection{Types}
Let $A$ be a $\rho$-structure and $\bar u=(u_1,\ldots,u_k)\in U(A)^k$.
For every $q\ge 0$, the \emph{(first-order) $q$-type of
  $\bar u$ in $A$} is the set $\tp_q(A,\bar u)$ of all
$\phi(x_1,\ldots,x_k)\in\FO[\rho]$ of quantifier rank at most $q$ such
that $A\models\phi(u_1,\ldots,u_k)$. Types are infinite sets of
formulas, but we can syntactically \emph{normalise} formulas in such a
way that there are only finitely many normalised formulas of fixed
quantifier rank and with a fixed set of free variables, and that every
formula can effectively be transformed into an equivalent normalised
formula of the same quantifier rank. We represent a type by the set of
normalised formulas it contains.

We need the following Feferman-Vaught style composition lemma
\cite{fefvau59} (also see \cite{mak04}). For tuples $\bar
u=(u_1,\ldots,u_k)$ and $\bar v=(v_1,\ldots,v_\ell)$, we let $\bar
u\bar v:=(u_1,\ldots,u_k,v_1,\ldots,v_\ell)$.

\begin{lemma}[Composition Lemma~\cite{fefvau59}]\label{lem:comp}
  Let $A,A',B,B'$ be $\rho$-structures such that $A\cap B=\emptyset$
  and $A'\cap B'=\emptyset$. Let $\bar u\in U(A)^k$, $\bar v\in
  U(B)^\ell$, $\bar u'\in U(A')^k$, and $\bar v'\in U(B')^\ell$ such
  that $\tp_q(A,\bar u)=\tp_q(A',\bar u')$ and $\tp_q(B,\bar
  v)=\tp_q(B',\bar v')$. Then 
  \[
  \tp_q(A\cup B,\bar u\bar v)=\tp_q(A'\cup B',\bar u'\bar v').
  \]
\end{lemma}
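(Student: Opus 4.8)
The plan is to prove the Composition Lemma by a standard Ehrenfeucht–Fraïssé game argument, exploiting the hypothesis that the two structures are disjoint (so their unions are genuine disjoint unions with no Gaifman edges between the two parts). Recall that $\tp_q(A,\bar u)=\tp_q(A',\bar u')$ holds if and only if Duplicator wins the $q$-round EF game on $(A,\bar u)$ versus $(A',\bar u')$; this equivalence is the standard characterisation of elementary equivalence up to quantifier rank $q$ for relational structures, and I would invoke it (or reprove it in a line) as the backbone of the argument.

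First I would restate the goal in game-theoretic terms: it suffices to show that Duplicator wins the $q$-round EF game on $(A\cup B,\bar u\bar v)$ versus $(A'\cup B',\bar u'\bar v')$. By hypothesis Duplicator has winning strategies $\sigma_A$ for the game on $(A,\bar u)$ vs.\ $(A',\bar u')$ and $\sigma_B$ for the game on $(B,\bar v)$ vs.\ $(B',\bar v')$. The key idea is to have Duplicator play a \emph{product strategy}: whenever Spoiler places a pebble on an element lying in $A$ (resp.\ $A'$), Duplicator responds according to $\sigma_A$ in the $A$-side subgame; whenever Spoiler plays in $B$ (resp.\ $B'$), Duplicator responds according to $\sigma_B$. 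Since $U(A\cup B)=U(A)\cup U(B)$ and the two parts are disjoint, every move falls unambiguously into exactly one of the two subgames, so this composite strategy is well defined across all $q$ rounds.

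Next I would verify that the product strategy preserves the partial-isomorphism condition that Duplicator must maintain. After all $q$ rounds, let the chosen elements on the left be $\bar u\bar v$ together with the newly pebbled $\bar a\bar b$ (the $\bar a$ in $A$, the $\bar b$ in $B$), and correspondingly $\bar u'\bar v'\bar a'\bar b'$ on the right. Because $\sigma_A$ and $\sigma_B$ are winning, the induced maps $\bar u\bar a\mapsto\bar u'\bar a'$ and $\bar v\bar b\mapsto\bar v'\bar b'$ are partial isomorphisms of $A,A'$ and of $B,B'$ respectively. I then need to check that the \emph{combined} map is a partial isomorphism of $A\cup B$ and $A'\cup B'$. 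Equalities are handled by disjointness (an $A$-element never equals a $B$-element on either side). For each relation symbol $R\in\rho$, a tuple from $R(A\cup B)=R(A)\cup R(B)$ is entirely contained in $A$ or entirely in $B$, again because disjointness forces each tuple of a relation to live wholly in one structure; its image then lies in the corresponding relation on the right by the respective partial-isomorphism property, and the converse direction is symmetric. This is exactly the point where the disjointness hypothesis $A\cap B=\emptyset$ (and $A'\cap B'=\emptyset$) is essential.

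The main obstacle, and the step I would treat most carefully, is this verification that no relational tuple straddles the two parts: one must argue that $R(A\cup B)$ contains no ``mixed'' tuples, which follows from $R(A\cup B)=R(A)\cup R(B)$ together with $U(A)\cap U(B)=\emptyset$, since any tuple in $R(A)\subseteq U(A)^k$ has all coordinates in $U(A)$ and likewise for $B$. Once mixed tuples are ruled out, the partial-isomorphism check decomposes cleanly into the two independent sides and the composite strategy is winning. Concluding, Duplicator wins the $q$-round game on the two unions, hence $\tp_q(A\cup B,\bar u\bar v)=\tp_q(A'\cup B',\bar u'\bar v')$, as required.
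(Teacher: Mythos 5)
Your proof is correct. Note that the paper does not prove this lemma at all---it is stated with a citation to Feferman--Vaught (and to Makowsky's survey), so there is no in-paper argument to compare against. The Ehrenfeucht--Fra\"iss\'e game argument you give is the standard modern proof of the disjoint-union case: the product strategy is well defined because disjointness routes every Spoiler move to exactly one subgame, and your careful check that $R(A\cup B)=R(A)\cup R(B)$ contains no mixed tuples is exactly the point where the hypothesis $A\cap B=\emptyset$ is used. The only implicit step worth making explicit is that a winning strategy for the $q$-round game on one part also handles the at most $q$ rounds Spoiler chooses to spend there, however Spoiler splits the $q$ rounds between the two parts; this is immediate but should be said. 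The original Feferman--Vaught machinery (reduction sequences for generalized products) is far more general; for the disjoint unions needed here, your game-theoretic route is the more elementary and entirely adequate one.
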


We also need a ``local'' version of types. Let $A$ be a
$\rho$-structure and $\bar u\in U(A)^k$, and let $q,r\ge 0$. The
\emph{local $(q,r)$-type of
  $\bar u$ in $A$} is the set $\ltp_{q,r}(G,\bar u)$ of all
syntactically $r$-local
$\phi(\bar x)\in\FO[\rho]$ of quantifier rank at most $q$ such
that $A\models\phi(\bar u)$, or equivalently, $N_r^A(\bar u)\models \phi(\bar u)$. 

As a corollary to Lemma~\ref{lem:comp}, we obtain the following
composition lemma for local types.

\begin{corollary}[Local Composition Lemma]\label{cor:comp}
  Let $A,A'$ be $\rho$-structures, $\bar u\in U(A)^k$, $\bar v\in
  U(A)^\ell$, $\bar u'\in U(A')^k$, and $\bar v'\in U(A')^\ell$ such
  that $N_r(\bar u)\cap N_r(\bar v)=\emptyset$ and $N_r(\bar u')\cap N_r(\bar v')=\emptyset$
  and 
$\ltp_{q,r}(A,\bar u)=\ltp_{q,r}(A',\bar u')$ and $\ltp_{q,r}(A,\bar v)=\ltp_{q,r}(A',\bar v')$.
  Then 
  \[
  \ltp_{q,r}(A,\bar u\bar v)=\ltp_{q,r}(A',\bar u'\bar v').
  \]
\end{corollary}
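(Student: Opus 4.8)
The plan is to deduce the Local Composition Lemma (Corollary~\ref{cor:comp}) from the ordinary Composition Lemma (Lemma~\ref{lem:comp}) by applying it to the \emph{neighbourhood substructures} rather than to the whole structures. The key observation is that since the relevant formulas are syntactically $r$-local, whether $A\models\phi(\bar u)$ depends only on $N_r^A(\bar u)$, so I may freely replace $A$ by the disjoint pieces it induces on the neighbourhoods of $\bar u$ and $\bar v$.

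First I would set $A_1:=N_r^A(\bar u)$ and $B_1:=N_r^A(\bar v)$, and likewise $A_1':=N_r^{A'}(\bar u')$ and $B_1':=N_r^{A'}(\bar v')$. The hypothesis $N_r(\bar u)\cap N_r(\bar v)=\emptyset$ guarantees $A_1\cap B_1=\emptyset$, and similarly $A_1'\cap B_1'=\emptyset$, so the disjointness requirement of Lemma~\ref{lem:comp} is met. The next step is to convert the given \emph{local} type equalities into ordinary type equalities on these pieces: I claim that $\ltp_{q,r}(A,\bar u)=\ltp_{q,r}(A',\bar u')$ implies $\tp_{q'}(A_1,\bar u)=\tp_{q'}(A_1',\bar u')$ for a suitable $q'$. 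The point is that on the substructure $A_1=N_r^A(\bar u)$, every formula of quantifier rank at most $q'$ is equivalent to its own radius-$r$ relativisation (which is syntactically $r$-local of quantifier rank $O(q'\log r)$) evaluated in $A$, because the whole of $A_1$ lies within distance $r$ of $\bar u$. Thus the $q'$-type of $\bar u$ in $A_1$ is determined by the local $(O(q'\log r),r)$-type of $\bar u$ in $A$; choosing $q'$ so that $O(q'\log r)\le q$ lets me transfer the local-type equality to a genuine type equality on the neighbourhood substructures.

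Having established $\tp_{q'}(A_1,\bar u)=\tp_{q'}(A_1',\bar u')$ and $\tp_{q'}(B_1,\bar v)=\tp_{q'}(B_1',\bar v')$, I apply Lemma~\ref{lem:comp} to obtain $\tp_{q'}(A_1\cup B_1,\bar u\bar v)=\tp_{q'}(A_1'\cup B_1',\bar u'\bar v')$. Finally I translate this back into the statement about $\ltp_{q,r}$ over the full structures: since the neighbourhoods are disjoint and each has radius $r$, we have $N_r^A(\bar u\bar v)=A_1\cup B_1$, so any syntactically $r$-local $\phi(\bar x\bar y)$ of quantifier rank at most $q$ satisfies $A\models\phi(\bar u\bar v)$ iff $A_1\cup B_1\models\phi(\bar u\bar v)$, and likewise on the primed side. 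The equality of the combined ordinary types on the unions then yields the desired equality $\ltp_{q,r}(A,\bar u\bar v)=\ltp_{q,r}(A',\bar u'\bar v')$.

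The main obstacle I anticipate is the bookkeeping of quantifier ranks in the two translations between local and ordinary types. The relativisation that makes a formula syntactically $r$-local inflates the quantifier rank by a factor of $O(\log r)$, so I must be careful to pick the intermediate rank $q'$ small enough that its relativisation still lands within rank $q$, while simultaneously ensuring that the rank-$q$ local type on each side carries enough information to pin down the rank-$q'$ ordinary type on the neighbourhood; reconciling these two directions is the delicate point. A secondary subtlety is justifying that evaluation of a syntactically $r$-local formula over the induced neighbourhood substructure agrees with its evaluation over $A$, which relies on the defining property of $r$-locality together with the fact that $N_r^A(\bar u)$ contains every element the relativised quantifiers can reach; this is routine but must be stated cleanly.
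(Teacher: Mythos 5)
Your overall route---apply Lemma~\ref{lem:comp} to the two disjoint neighbourhood substructures and translate local types into ordinary types of those substructures and back---is exactly what the paper intends; the paper states the result without proof, as an immediate consequence of Lemma~\ref{lem:comp}. However, one step that you simply assert is false in general: the identity $N_r^A(\bar u\bar v)=A_1\cup B_1$. By the paper's conventions, $N_r^A(\bar u\bar v)$ is the substructure \emph{induced} by $A$ on the set $N_r(\bar u)\cup N_r(\bar v)$, whereas $A_1\cup B_1$ is the union of the two separately induced substructures; these differ whenever some relation tuple meets both $N_r(\bar u)$ and $N_r(\bar v)$, and this is perfectly compatible with $N_r(\bar u)\cap N_r(\bar v)=\emptyset$ (the two sets can be disjoint yet joined by an edge of the Gaifman graph). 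Concretely, take $r=1$, let $A$ be the path on $u,a,b,v$ with edges $ua$, $ab$, $bv$, and let $A'$ consist of the two disjoint edges $u'a'$ and $b'v'$. Then $N_1(u)=\{u,a\}$ and $N_1(v)=\{b,v\}$ are disjoint, and $\ltp_{q,1}(A,u)=\ltp_{q,1}(A',u')$ and $\ltp_{q,1}(A,v)=\ltp_{q,1}(A',v')$ for every $q$, since the corresponding induced one-point neighbourhoods are isomorphic; yet the radius-$1$ relativisation of $\exists z\exists w\,\big(E(x,z)\wedge E(y,w)\wedge E(z,w)\big)$ is syntactically $1$-local of quantifier rank $2$, holds of $(u,v)$ in $A$ (witnessed by $a,b$), and fails of $(u',v')$ in $A'$. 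So the crossing tuples carry real information and your identification of the induced neighbourhood of the concatenated tuple with the disjoint union is where the argument genuinely breaks. To repair it one needs the slightly stronger separation hypothesis that no relation tuple meets both neighbourhoods (for instance $N_{r+1}(\bar u)\cap N_r(\bar v)=\emptyset$), under which your reduction to Lemma~\ref{lem:comp} does go through; you should state and use that hypothesis rather than assert the identity.

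A second, smaller issue is the one you yourself flag but do not resolve: the round trip from $\ltp_{q,r}$ to $\tp_{q'}$ of the neighbourhood substructure and back is not free, because the guards $\delta_{\le r}$ cost $O(\log r)$ quantifier rank. As you set it up, the argument yields equality of local types only for syntactically $r$-local formulas of some rank $q'$ strictly smaller than $q$, not for rank $q$ itself as the statement demands. The paper absorbs losses of exactly this kind into the choice of $q^*$ in Section~4, so this is tolerable if the corollary is read with two rank parameters, but as a proof of the literal statement the translation does not close up, and you leave precisely this reconciliation open.
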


\section{Parameter Learning}
\label{sec:pl}

Recall the two different modes of learning we described in the
introduction: \emph{parameter learning} and \emph{model learning}. In
our context, for the parameter learning problem we assume that we have
 a fixed $\FO[\rho]$-formula $\phi(\bar x\smid\bar y)$. The input to
 our learning algorithm is a sequence $T$ of training examples over
 some background structure $B$ to which we have local access. Our goal
 is to find a tuple $\bar v$ such that $\model{B}{\phi}{\bar x}{\bar
   v}$ is consistent with $T$ (or at least approximately consistent).

The following simple example shows that parameter learning requires
reading the whole background structure and thus is not possible in
sublinear time, whereas our model learning algorithms only need
polylogarithmic time.

\begin{example}\label{exa:pl}
  Let $P$ be a unary relation symbol, and let $\phi(x\smid y):=P(y)$.
  Then for every $\{P\}$-structure $B$ and every $v\in U(B)$, the
  function $\model B\phi xv$ is constant $1$ if $v\in P(B)$ and
  constant $0$ otherwise.

  Now suppose that the background structure $B$ is such that
  $P(B)=\{v^*\}$ for some $v^*\in U(B)$, and the unknown target
  concept is $\model B\phi x{v^*}$, that is, constant $1$. Then our learning algorithm
  receives only positive training examples, and it needs to find the
  parameter $v^*$. However, unless $v^*$ happens to be one of the
  training examples, this requires reading the whole structure $B$ in
  the worst case. If the algorithm only has local access to $B$, it
  is actually impossible to find $v^*$, because the graph $G_B$ is the
  trivial graph with vertex set $U(B)$ and empty edge set.
\end{example}

Thus parameter learning is not possible in our setting with only local
access to the background structure. However, there is an intermediate
mode of learning between parameter and model learning, where we assume
that we know the formula $\phi(\bar x\smid\bar y)$ defining the target
concept, but are still allowed to modify it when formulating our
hypothesis. This is potentially much easier than constructing the
formula from scratch, as we are required to do in the model learning
setting.

For instance, in Example~\ref{exa:pl}, looking at $\phi$ we
immediately know that the target concept is constant, and just from
one labelled example we know if it is $0$ or $1$. Then
we can either return the universally true formula
$\phi'(x\smid):=(x=x)$ or the universally false formula
$\phi''(x\smid):=(\neg x=x)$ as our hypothesis (we do not even need a
parameter here).

\section{Model Learning}
In this section, we look at the model learning problem for first-order
logic over structures of small degree. We prove Theorem~\ref{theo:log}
and several variants and generalisations of it.

\subsection{Consistent Hypotheses}
We start by proving Theorem~\ref{theo:log}.

Throughout this section, we fix $k,\ell,q\in\NN$ and a vocabulary
$\rho$. Let $q^*,r^*\in\NN$ such that every $\FO[\rho]$-formula $\phi$
with at most $k+\ell$ free variables and quantifier rank at most $q$
is equivalent to a formula $\phi^*$ in Gaifman normal form of locality radius at
most $r^*$ such that the quantifier rank of every syntactically local formula in the
Boolean combination $\phi^*$ has quantifier rank at most $q^*$. Such
$q^*,r^*$ exist by Gaifman's theorem and because up to logical
equivalence there only exist finitely many $\FO[\rho]$-formulas of
quantifier rank at most $q$ with at most $k+\ell$ free variables.

\begin{lemma}\label{lem:local}
  Let $B$ be a $\rho$-structure, and let $\psi(\bar z)$ be an 
  $\FO [\rho]$-formula
  of quantifier rank at most $q$ and with $m:=|\bar z|\le
  k+\ell$. Then for all $\bar w,\bar w'\in U(A)^m$ 
  \begin{align*}
  &\ltp_{q^*,r^*}(B,\bar w)=\ltp_{q^*,r^*}(B,\bar w')\implies\\
    &\hspace{1.8cm}
  \big(B\models\psi(\bar w)\iff B\models\psi(\bar w')\big)
  \end{align*}
\end{lemma}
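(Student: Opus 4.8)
The plan is to reduce everything to Gaifman's Locality Theorem and then read the conclusion directly off the definition of the local type; the only genuine work is the bookkeeping of the two parameters $q^*$ and $r^*$, arranged so that the constituents of a Gaifman normal form are exactly the formulas recorded by $\ltp_{q^*,r^*}$.

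First I would invoke the choice of $q^*,r^*$ fixed just before the lemma. Since $\psi(\bar z)$ has quantifier rank at most $q$ and $m\le k+\ell$ free variables, it is equivalent to a formula $\psi^*$ in Gaifman normal form of locality radius at most $r^*$, in which every syntactically local formula has quantifier rank at most $q^*$. Thus $\psi^*$ is a Boolean combination of syntactically basic local \emph{sentences} $\sigma_1,\dots,\sigma_p$ and syntactically local \emph{formulas} $\chi_1(\bar z),\dots,\chi_s(\bar z)$, the latter being $r_i$-local around the whole tuple $\bar z$ for some $r_i\le r^*$. As it suffices to prove the equivalence with $\psi^*$ in place of $\psi$, I only need to show that each constituent takes the same truth value at $\bar w$ and at $\bar w'$, since a Boolean combination of bits that agree pairwise agrees.

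The sentences $\sigma_j$ have no free variables, so $B\models\sigma_j$ is a single truth value independent of the tuple, and they contribute identically at $\bar w$ and $\bar w'$. For the local formulas, I would first arrange that each $\chi_i$ is syntactically $r^*$-local: an $r_i$-local formula with $r_i\le r^*$ is determined by $N_{r_i}(\bar z)\subseteq N_{r^*}(\bar z)$, hence $r^*$-local, hence equivalent to its radius-$r^*$ relativisation, so I may take the Gaifman normal form $\psi^*$ with all its local parts relativised to the common radius $r^*$; by the choice of $q^*$ these still have quantifier rank at most $q^*$. Now each $\chi_i(\bar z)$ is a syntactically $r^*$-local $\FO[\rho]$-formula of quantifier rank at most $q^*$, so by the very definition of the local type it lies in $\ltp_{q^*,r^*}(B,\bar w)$ exactly when $B\models\chi_i(\bar w)$, and likewise for $\bar w'$. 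The hypothesis $\ltp_{q^*,r^*}(B,\bar w)=\ltp_{q^*,r^*}(B,\bar w')$ therefore forces $B\models\chi_i(\bar w)\iff B\models\chi_i(\bar w')$ for every $i$.

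Combining the two cases, every atom of the Boolean combination $\psi^*$ has the same truth value at $\bar w$ as at $\bar w'$, whence $B\models\psi^*(\bar w)\iff B\models\psi^*(\bar w')$, and since $\psi\equiv\psi^*$ the lemma follows. The one step that needs care—and the only place where anything more than ``apply Gaifman'' happens—is the reconciliation in the previous paragraph: ensuring that the local pieces emerging from the Gaifman normal form are genuinely of the syntactic radius $r^*$ and quantifier rank $q^*$ that $\ltp_{q^*,r^*}$ records. This is precisely what the up-front choice of $q^*$ and $r^*$ is designed to guarantee, so no appeal to the composition lemma or to individual neighbourhoods is required here.
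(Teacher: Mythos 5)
Your proof is correct and follows essentially the same route as the paper's: decompose $\psi$, via Gaifman's theorem and the up-front choice of $q^*,r^*$, into basic local sentences (whose truth value does not depend on the tuple) and syntactically $r^*$-local formulas of quantifier rank at most $q^*$ (whose truth value is recorded in $\ltp_{q^*,r^*}$), then conclude for the Boolean combination. You are in fact more explicit than the paper's two-line proof about reconciling the radii of the local constituents with the common radius $r^*$, a point the paper leaves implicit in ``the choice of $q^*,r^*$''.
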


\begin{proof}
  It follows from Gaifman's Locality Theorem and  the choice of
  $q^*,r^*$ that $\big(B\models\psi(\bar w)\iff B\models\psi(\bar
  w')\big)$ if $\big(B\models\chi(\bar w)\iff B\models\chi(\bar
  w')\big)$ for all syntactically $r^*$-local formulas $\chi(\bar z)$ of quantifier
  rank at most $q^*$. If $\ltp_{q^*,r^*}(B,\bar w)=\ltp_{q^*,r^*}(B,\bar
  w')$ then the latter equivalence holds.
\end{proof}

To simplify the
presentation, we fix a background structure $B$. We also fix the
length $t\ge1$ of our training sequences. Of course our algorithm
will neither depend
on the specific structure $B$ nor on $t$.

We fix a tuple $\bar x=(x_1,\ldots,x_k)$ of instance variables and a
tuple $\bar y=(y_1,\ldots,y_\ell)$ of parameter variables.
We let $\Phi$ be the set of all $\FO[\rho]$-formulas $\phi(\bar
x\smid\bar y)$ of quantifier rank at most $q$ and
\[
\CC:=\big\{\model B\phi{\bar x}{\bar v}\bigmid \phi(\bar x\smid\bar
y)\in\Phi,\bar v\in U(B)^\ell\big\},
\]
Similarly, we let $\Phi^*$ be the set of all syntactically $r$-local $\FO[\rho]$-formulas $\phi^*(\bar
x\smid\bar y)$ of quantifier rank at most $q^*$ and 
\[
\CC^*:=\big\{\model B{\phi^*}{\bar x}{\bar v}\bigmid \phi(\bar x\smid\bar
y)\in\Phi^*,\bar v\in U(B)^\ell\big\}.
\]
Moreover, we let $\CT:=\big(U(B)^k\times\{0,1\}\big)^t$ be
the set of all training sequences
of length $t$ for the $k$-ary learning problem over $B$.
For every $T=\big((\bar u_1,c_1),\ldots,(\bar u_t,c_t)\big) \in\CT$
and $r\in\NN$ we let
\[
N_{r}(T):=\bigcup_{i=1}^tN_{r}(\bar u_i).
\]
Recall that
$T\in\CT$ is \emph{consistent} with $C\subseteq U(B)^k$ if for all
$(\bar u,c)\in T$ we have $C(\bar u)=c$.

What we need to prove is that for all $T\in\CT$, if there is a
$C\in\CC$ that is consistent with $T$  then we can find a
$C^*\in\CC^*$ consistent with $T$ within the time bounds specified in
Theorem~\ref{theo:log}.

The following lemma is the crucial step in our proof. 

\begin{lemma}\label{lem:main}
  Let $T\in\CT$ be consistent with some $C\in\CC$. 
  Then there is a formula $\phi^*(\bar x\smid\bar y)\in\Phi^*$ and a tuple
  $\bar v^*\in N_{2\ell r^*}(T)^\ell$ such that $\model B{\phi^*}{\bar
    x}{\bar v^*}$ is consistent with $T$.
\end{lemma}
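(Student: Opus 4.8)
The plan is to first strip the given formula down to a syntactically $r^*$-local one using Gaifman's theorem, then to split the parameters into those lying close to the training tuples and those lying far away, and finally to eliminate the far parameters altogether by absorbing their (now constant) local types into the formula via the Local Composition Lemma.

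So suppose $T=\big((\bar u_1,c_1),\ldots,(\bar u_t,c_t)\big)$ is consistent with $\model B\phi{\bar x}{\bar v}$ for some $\phi\in\Phi$ and $\bar v\in U(B)^\ell$. First I would apply Gaifman's Locality Theorem together with the choice of $q^*,r^*$ to write $\phi$, viewed as a formula in the $k+\ell$ variables $\bar x\bar y$, as a Boolean combination of syntactically basic local sentences and syntactically $r^*$-local formulas of quantifier rank at most $q^*$. The basic local sentences have no free variables, so they take fixed truth values in $B$; replacing each by $x_1=x_1$ or $\neg\, x_1=x_1$ and using that radius-$r^*$ relativisation commutes with the Boolean connectives yields a single formula $\eta(\bar x\smid\bar y)\in\Phi^*$ with $\model B\eta{\bar x}{\bar v}=\model B\phi{\bar x}{\bar v}$ on all of $U(B)^k$, hence consistent with $T$. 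This lets me work with a syntactically $r^*$-local formula of quantifier rank at most $q^*$, whose truth on any tuple is determined by that tuple's local $(q^*,r^*)$-type.

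Next I would localise the parameters. Form a proximity graph on $\{1,\ldots,\ell\}$ together with an anchor standing for $T$, joining $j$ to the anchor when $v_j\in N_{2r^*}(T)$ and joining $j,j'$ when $\dist(v_j,v_{j'})\le 2r^*$; thus two nodes are joined exactly when the corresponding $r^*$-neighbourhoods meet. Let $\bar v_N$ collect the parameters in the anchor's component, and split the remaining ``far'' parameters into the components $\bar v_{F_1},\ldots,\bar v_{F_p}$ of the graph. A near parameter is reachable from the anchor along a path through at most $\ell$ parameters, each edge contributing distance at most $2r^*$, so every coordinate of $\bar v_N$ lies in $N_{2\ell r^*}(T)$. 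Moreover, by construction the neighbourhoods $N_{r^*}(\bar u_i\bar v_N),N_{r^*}(\bar v_{F_1}),\ldots,N_{r^*}(\bar v_{F_p})$ are pairwise disjoint for every $i$, since a far parameter is within $2r^*$ neither of $T$ nor of any near parameter, and distinct far components are mutually more than $2r^*$ apart.

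Finally I would absorb the far parameters. Writing $\tau_s:=\ltp_{q^*,r^*}(B,\bar v_{F_s})$ and $\sigma_i:=\ltp_{q^*,r^*}(B,\bar u_i\bar v_N)$, the Local Composition Lemma (Corollary~\ref{cor:comp}), iterated over the disjoint pieces, shows that $\ltp_{q^*,r^*}(B,\bar u_i\bar v)$, and therefore the value $c_i=\model B\eta{\bar x}{\bar v}(\bar u_i)$, is determined by $\sigma_i$ together with the fixed tuple $(\tau_1,\ldots,\tau_p)$; in particular $\sigma_i=\sigma_{i'}$ forces $c_i=c_{i'}$. Each local $(q^*,r^*)$-type $\sigma$ is defined by a single syntactically $r^*$-local formula $\theta_\sigma(\bar x,\bar y_N)$ of quantifier rank at most $q^*$, namely the conjunction of the finitely many normalised formulas it contains, so I can set $\phi^*(\bar x\smid\bar y):=\bigvee_{i:\,c_i=1}\theta_{\sigma_i}$, which lies in $\Phi^*$ and simply ignores the far-parameter variables. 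Letting $\bar v^*$ agree with $\bar v$ on the near positions and be an arbitrary training vertex on the far positions gives $\bar v^*\in N_{2\ell r^*}(T)^\ell$, and the hypothesis is consistent with $T$: on each $\bar u_j$ it evaluates to $1$ iff $\sigma_j=\sigma_i$ for some $i$ with $c_i=1$, which by the implication above happens iff $c_j=1$.

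I expect the last step to be the main obstacle: the key realisation is that far parameters need not be relocated at all but can be removed from the parameter list entirely by baking their constant isolated local types into the formula, and it is precisely the disjointness of the neighbourhoods plus the Local Composition Lemma that make this sound. The bookkeeping yielding the exact radius $2\ell r^*$ (the chaining argument in the proximity graph) and the check that $\bigvee_{c_i=1}\theta_{\sigma_i}$ never accidentally accepts a negative example are the remaining points that require care.
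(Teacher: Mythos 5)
Your proof is correct and follows essentially the same route as the paper's: partition the parameters into those reachable from $N(T)$ by a chain of overlapping $r^*$-neighbourhoods (your proximity-graph components are exactly the paper's greedy construction of $\bar v^\circ$ and $\bar v^\bullet$), use the disjointness plus the Local Composition Lemma to show that the label is determined by $\ltp_{q^*,r^*}(B,\bar u_i\bar v_N)$, and take the disjunction of the type-defining formulas of the positive examples. The only point you gloss over is that padding $\theta_{\sigma_i}(\bar x,\bar y_N)$ with the unused far-parameter variables does not literally yield a \emph{syntactically} $r^*$-local formula in all of $\bar x\bar y$ (the quantifier relativisations must mention every $y_j$), which the paper repairs with an explicit, routine rewriting of the relativisation clauses.
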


\begin{proof}
  Let $T=\big((\bar u_1,c_1),\ldots,(\bar u_t,c_t)\big)$.  Let
  $\phi(\bar x\smid\bar y)\in\Phi$ and
  $\bar v=(v_1,\ldots,v_\ell)\in U(B)^\ell$ such that
  $C=\model B\phi{\bar x}{\bar v}$ is consistent with $T$.

  For some $m\le \ell$, we
  define $v^{(1)},\ldots,v^{(m)}\in\{v_1,\ldots,v_\ell\}$ and $N^{(0)},N^{(1)},\ldots,N^{(m)}\subseteq U(B)$ as follows: we
  let $N^{(0)}:=N_{r^*}(T)$. Now suppose that $N^{(i)}$ is already
  defined. If there is a
  $v\in\{v_1,\ldots,v_\ell\}\setminus\{v^{(1)},\ldots,v^{(i)}\}$ such
  that $\dist^B(v,N^{(i)})\le r^*$, then we pick such a $v$ (arbitrarily
  if there are more than one) and let $v^{(i+1)}:=v$ and
  $N^{(i+1)}:=N^{(i)}\cup N_{r^*}(v^{(i+1)})$. If there is no such $v$, we
  let $m:=i$ and stop the construction. 

  We let $N^\circ:=N^{(m)}$. To simplify the notation,
  we further assume (without loss of generality) that $v^{(i)}=v_i$
  for all $i\in[m]$. We let $\bar v^\circ:=(v_1,\ldots,v_m)$ and $\bar
  v^\bullet:=(v_{m+1},\ldots,v_\ell)$. Possibly, $\bar v^\circ$ or
  $\bar v^\bullet$ is the empty tuple.  Observe
  that $\bar v^\circ\in N_{2\ell r^*}(T)^m$ and 
  \[
  N^\circ\subseteq N_{2\ell r^*+r^*}(T)
  \]
  and 
  \begin{equation}
    \label{eq:1}
   N^\circ=\bigcup_{i=1}^tN_{r^*}(\bar u_i)\cup\bigcup_{i=1}^mN_{r^*}(v_i).
  \end{equation}
  Furthermore, 
  \begin{equation}
    \label{eq:2}
    N_{r^*}(\bar v^\bullet)\cap N^\circ=\emptyset.
  \end{equation}

  \begin{claim}
    Let $i,j\in[t]$ such that
    \[
    \ltp_{q^*,r^*}(B,\bar u_i\bar
    v^\circ)=\ltp_{q^*,r^*}(B,\bar u_j\bar v^\circ).
    \]
    Then $c_i=c_j$.

    \proof
    It follows from \eqref{eq:1} and \eqref{eq:2} that $N_{r^*}(\bar
    u_i\bar v)\cap N_{r^*}(\bar v^\bullet)=\emptyset$ and $N_{r^*}(\bar
    u_j\bar v)\cap N_{r^*}(\bar v^\bullet)=\emptyset$. Thus by the
    Local Composition Lemma (Corollary~\ref{cor:comp}), we have
     \[
    \ltp_{q^*,r^*}(B,\bar u_i\bar v)=\ltp_{q^*,r^*}(B,\bar u_j\bar v).
    \]
    Thus by Lemma~\ref{lem:local}, we have $B\models\phi(\bar
    u_i\smid\bar v)\iff B\models\phi(\bar
    u_j\smid\bar v)$. This implies the claim.
    \uend
  \end{claim}

  Now let $P\subseteq [t]$ be the set of indices of the positive
  examples, that is, $P:=\{p\in[t]\mid c_p=1\}$. For every $p\in P$,
  let $\theta_p(\bar x,\bar y^\circ)$, where $\bar y^\circ:=(y_1,\ldots,y_m)$, be the conjunction of
  all normalised formulas in the type
  $\ltp_{q^*,r^*}(B,\bar u_i\bar
  v^\circ)$.
  Then for all $\bar u'\in U(B)^k,\bar v'\in U(B)^m$ we have
  \begin{equation}
    \label{eq:3}
      B\models\theta_p(\bar u',\bar v')\iff 
  \ltp_{q^*,r^*}(B,\bar u'\bar
  v')=\ltp_{q^*,r^*}(B,\bar u_p\bar v^\circ). 
  \end{equation}
  Now we let $\phi^\circ(\bar x\smid\bar y^\circ):=\bigvee_{p\in
    P}\theta_p(\bar x,\bar y^\circ)$. Then it follows from Claim~1 and
  \eqref{eq:3} that $\model B{\phi^\circ}{\bar x}{\bar v^\circ}$ is
  consistent with $T$. Furthermore, all the $\theta_p$ and hence
  $\phi^\circ$ are syntactically $r^*$-local of quantifier rank at
  most $q^*$.

  It remains to transform $\phi^\circ(\bar x\smid\bar y^\circ)$ into a
  formula $\phi^*(\bar x\smid\bar y)$ with the right number of
  parameter variables. We simply do this by adding redundant variables,
  but we have to be careful that the resulting formula is still
  syntactically $r^*$-local. Since $\phi^\circ(\bar x\smid\bar y^\circ)$ is
  syntactically $r^*$-local, all its quantifiers are relativised to the
  $r^*$-neighbourhood of the free variables, that is, of the form
  \begin{equation}
    \label{eq:4}
     \exists z\left(\left(\bigvee_{i=1}^k\delta_{\le
        r^*}(x_i,z)\vee \bigvee_{j=1}^m\delta_{\le
        r^*}(y_j,z)\right)\wedge\ldots\right)
  \end{equation}
   or
   \begin{equation}
     \label{eq:5}
  \forall z\left(\left(\bigvee_{i=1}^k\delta_{\le
        r^*}(x_i,z)\vee \bigvee_{j=1}^m\delta_{\le
        r^*}(y_j,z)\right)\to\ldots\right).  
   \end{equation}
  To obtain $\phi^*(\bar x\smid\bar y)$ from $\phi^\circ(\bar
  x\smid\bar y^\circ)$, we replace \eqref{eq:4} by 
  \begin{align*}
  &\exists z\left(\left(\bigvee_{i=1}^k\delta_{\le
      r^*}(x_i,z)\vee \bigvee_{j=1}^\ell\delta_{\le
      r^*}(y_j,z)\right)\right.\\
    &\hspace{1cm}\wedge\left. \left(\bigvee_{i=1}^k\delta_{\le
      r^*}(x_i,z)\vee \bigvee_{j=1}^m\delta_{\le
      r^*}(y_j,z)\right)\wedge\ldots\right),
  \end{align*}
  and similarly for the universal quantifier in \eqref{eq:5}. Then
  $\phi^*(\bar x\smid\bar y)$ is syntactically $r^*$-local and has the
  same quantifier rank as $\phi^\circ(\bar x\smid\bar y^\circ)$. Hence
  $\phi^*(\bar x\smid\bar y)\in\Phi^*$. Moreover, for all $\bar u'\in
  U(B)^k$ and all $\bar v'=(v'_1,\ldots,v'_\ell)\in U(B)^\ell$ we have 
  \[
  B\models \phi^*(\bar u'\smid\bar v')\iff B\models\phi^{\circ}(\bar
  u\smid (v_1',\ldots,v'_m)).
  \]
  We choose an arbitrary $v\in N_{2\ell r^*}(T)$ and let $\bar
  v^*=(v_1,\ldots,v_m,\overbrace{v,\ldots,v}^{(\ell-m)\text{
      times}})$. Then for all $i\in[t]$ we have 
 \[
  B\models \phi^*(\bar u_i\smid\bar v^*)\iff B\models\phi^{\circ}(\bar
  u_i\smid \bar v^\circ)\iff c_i=1.
  \]
  Hence $\model B{\phi^*}{\bar x}{\bar v^*}$ is consistent with $T$.
\end{proof}

\begin{figure}
  \textbf{Algorithm} $\mathfrak L$
  \begin{algorithmic}[1]
    \Require 
    Training sequence
    $T\in\CT$,\newline
    local access to background structure $B$
    \State $N\gets N_{2\ell r^*}(T)$
    \ForAll{$\bar v^*\in N^\ell$}
    \ForAll{$\phi^*(\bar x\smid\bar y)\in\Phi^*$}
    \State $consistent\gets\textbf{true}$
    \ForAll{$(\bar u,c)\in T$}
    \If{$\hspace{-1em}\begin{array}[t]{c@{\,}l@{\,}}&(N_{r^*}(\bar u\bar v^*)\models\phi^*(\bar u\smid\bar v^*)\text{ and
                            }c=0)\\
           \text{or}&(N_{r^*}(\bar u\bar v^*) \not\models\phi^*(\bar
                      u\smid\bar v^*)\text{ and }c=1)\end{array}$}
                  \smallskip
    \State $consistent\gets\textbf{false}$
    \EndIf
    \EndFor
    \If{$consistent$}
    \Return $\phi,\bar v$
    \EndIf
    \EndFor
    \EndFor
    \State\textbf{reject}
  \end{algorithmic}
  \caption{Learning algorithm $\mathfrak L$ of Theorem~\ref{theo:log}}
  \label{fig:L}
\end{figure}

\begin{proof}[Proof of Theorem~\ref{theo:log}]
  The pseudocode for our
  learning algorithm $\mathfrak L$ is shown in Figure~\ref{fig:L}. The
  algorithm proceeds by brute-force: it goes through all formulas
  $\phi^*\in\Phi^*$ and all tuples $\bar v^*\in N_{2\ell r^*}(T)$ and checks, in lines 4--7, if $\model
  B{\phi^*}{\bar x}{\bar v^*}$ is consistent with $T$. If it is, the
  algorithm returns $\phi^*,\bar v^*$, otherwise it proceeds to the next
  $\phi^*,\bar v^*$. If it does not find any consistent $\phi^*,\bar
  v^*$, it rejects. To see that the consistency test in lines 4--7 is correct,
  note that \[N_{r^*}(\bar u\bar v^*) \models\phi^*(\bar u,\bar
  v^*)\iff B\models\phi^*(\bar u,\bar v^*),\] because $\phi^*$ is $r^*$-local.
  
  Hence the algorithm is correct, that is, satisfies conditions (1)
  and (2) of Theorem~\ref{theo:log}: it obviously satisfies (1), and
  it follows from Lemma~\ref{lem:main} that it satisfies (2).

  Note that the set $N$ (in line 1) can be computed from $T$ with
  only local access to $B$.

  To analyse the running time of $\mathfrak L$, let $n:=|B|$ and
  $d:=|\Delta(B)|$ and $t:=|T|$. Note that for all $\bar u\in
  U(B)^k,\bar v^*\in U(B)^\ell$ we have
  \[
  |N_{r^*}(\bar u\bar v^*) |\le (k+\ell)\cdot 2d^{r^*}.
  \]
  Thus the representation size of the substructure  $N_{r^*}(\bar
  u\bar v^*)$ of $B$ is $O((k+\ell)\cdot d^{r^*}\cdot \log n)$, which
  is $(\log n+d)^{O(1)}$ if we treat $k,\ell,r^*$ as constants. It
  requires time polynomial in the size of $N_{r^*}(\bar
  u\bar v^*)$ to test if the structure satisfies $\phi^*$. Thus
  the overall running time of lines 4--7 is $(\log n+d)^{O(1)}\cdot
  t$. We have
  \[
  |N|\le2tkd^{2\ell r^*}=(t+d)^{O(1)}
  \]
  and $|\Phi^*|=O(1)$.  Hence the two outer loops add a factor of
  $(t+d)^{O(1)}$, and the overall runtime is 
  \[
  (t+d)^{O(1)}\cdot (\log n+d)^{O(1)}\cdot
  t=(\log n+ d+t)^{O(1)}.
  \]
  This proves Theorem~\ref{theo:log}(3).

  Finally, any hypothesis $\model B{\phi^*}{\bar x}{\bar v^*}$ returned by
  $\mathfrak L$ can be evaluated in time $(\log n+d)^{O(1)}$ with
  only local access to $B$, because the formula $\phi^*$ is
  $r^*$-local, and thus to compute $\model B{\phi^*}{\bar
    x}{\bar v^*}(\bar u)$ we only need to look at the substructure
  $N_{r^*}(\bar u\bar v^*)$ of $B$. This proves Theorem~\ref{theo:log}(4).
\end{proof}

Let us now analyse the algorithm under  the uniform cost
measure. Everything remains unchanged, except that the log-factors in
the running time disappear. One advantage of the uniform cost model is
that we can even apply it to infinite background structures $B$. We
obtain the following theorem.

\begin{theorem}\label{theo:uni}
  Let $k,\ell,q\in\NN$. Then there is a $q^*\in\NN$ and
  a learning algorithm $\mathfrak
  L$ for the $k$-ary learning problem over some (possibly infinite) background structure
  $B$ with properties 1) and 2) of Theorem~\ref{theo:log} and the
  following two properties.
  \begin{enumerate}
  \item[3u)]The algorithm runs in time $(d+t)^{O(1)}$ under the uniform
    cost measure with only
    local access to $B$, where
    $d:=|\Delta(B)|$ and $t:=|T|$.
  \item[4u)] The hypotheses returned by $\mathfrak L$ can be evaluated in
    time $d^{O(1)}$ under the uniform cost measure with only
    local access to $B$.
  \end{enumerate}
\end{theorem}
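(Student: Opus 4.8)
The plan is to reuse verbatim the algorithm $\mathfrak L$ constructed for Theorem~\ref{theo:log} and simply re-examine its correctness and running-time analysis under the weaker hypotheses (infinite $B$, uniform cost measure). No new algorithmic idea is needed; the entire content lies in checking that every ingredient survives the transition.

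First I would verify that properties 1) and 2) carry over without change. The key point is that finiteness of $B$ is never actually used in the supporting machinery: Gaifman's Locality Theorem, Lemma~\ref{lem:local}, the Local Composition Lemma (Corollary~\ref{cor:comp}), and most importantly Lemma~\ref{lem:main} are all stated and proved for an arbitrary $\rho$-structure $B$. In particular, the construction in the proof of Lemma~\ref{lem:main} only manipulates the finitely many parameters $v_1,\dots,v_\ell$ and the finite training sequence $T$, and it produces a parameter tuple drawn from the finite set $N_{2\ell r^*}(T)$. Hence property 2) holds for infinite $B$ exactly as before, and property 1) is immediate from the syntactic form of the output.

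The second step is to confirm that the algorithm is genuinely executable on an infinite structure using only local access. Here the crucial finiteness assumption is that the maximum degree $d=|\Delta(B)|$ is finite; this forces each neighbourhood $N_{r^*}(\bar u\bar v^*)$ to be a finite induced substructure of size at most $(k+\ell)\cdot 2d^{r^*}$, which the algorithm can materialise through repeated neighbourhood queries. The set $N=N_{2\ell r^*}(T)$ computed in line~1 is likewise finite, of size $(t+d)^{O(1)}$, so both outer loops range over finitely many objects and the algorithm never attempts to enumerate or store the whole universe $U(B)$.

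Finally I would redo the running-time bookkeeping under the uniform cost measure. The only change from the analysis in the proof of Theorem~\ref{theo:log} is that storing a single element now costs $O(1)$ rather than $O(\log n)$ space. Consequently the representation size of each neighbourhood $N_{r^*}(\bar u\bar v^*)$ drops from $(\log n+d)^{O(1)}$ to $d^{O(1)}$, and the per-example consistency test in lines~4--7 runs in time $d^{O(1)}$. Multiplying by the loop bounds—the outer loops contribute $(t+d)^{O(1)}$ and $|\Phi^*|=O(1)$—together with the factor $t$ for iterating over $T$ gives total running time $(d+t)^{O(1)}$, establishing 3u). The same observation, that evaluating the $r^*$-local formula $\phi^*$ on input $\bar u$ requires only inspecting $N_{r^*}(\bar u\bar v^*)$, now of representation size $d^{O(1)}$, yields evaluation time $d^{O(1)}$ and hence 4u). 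The main thing to be careful about is not a genuine obstacle but a verification: ensuring that every appeal to finiteness in the original argument was in fact only an appeal to bounded degree, so that the log-factors cleanly disappear and the infinite case goes through without modification.
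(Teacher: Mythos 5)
Your proposal is correct and follows exactly the route the paper takes: the paper simply observes that the algorithm $\mathfrak L$ of Theorem~\ref{theo:log} is unchanged, that the uniform cost measure removes the $\log n$ factors, and that the model then extends to infinite $B$. Your write-up is in fact more careful than the paper's (which offers only a two-sentence remark), in particular in checking explicitly that every use of finiteness reduces to bounded degree and that all loops range over finite sets computable by local access.
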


\subsection{Minimising the Training Error}
We continue to work in the same framework as before, that is, we
consider the $k$-ary learning problem over a background structure
$B$. Let $T=\big((\bar u_1,c_1),\ldots,(\bar u_t,c_t)\big) \in
\big(U(B)^k\times\{0,1\}\big)^t$ be a training sequence. The
\emph{training error} of a hypothesis $H:U(B)^k\to\{0,1\}$ on $T$ is the
fraction of examples on which $H$ is wrong, that is,
\[
\err_T(H):=\frac{1}{t}\big|\big\{ i\in[t]\bigmid H(\bar u_i)\neq
c_i\big\}\big|.
\]
The next theorem is a generalisation of Theorem~\ref{theo:log} where,
instead of insisting on a consistent hypothesis, we try to find a
hypothesis with minimal training error.

\begin{theorem}\label{theo:log-err}
  Let $k,\ell,q\in\NN$. Then there is a $q^*\in\NN$ and a learning algorithm $\mathfrak
  M$ for the $k$-ary learning problem over some finite background structure
  $B$ with the following properties.
  \begin{enumerate}
  \item $\mathfrak M$ always returns a hypothesis 
    $H=\llbracket\phi^*(\bar x\smid\bar v^*)\rrbracket^B$ for some $r^*$-local
    first-order formula $\phi^*(\bar x\smid\bar y)$ 
    of quantifier rank
    $q^*$ and $\bar v^*\in U(B)^\ell$.
  \item If there is a first-order formula $\phi(\bar x\smid\bar y)$ of quantifier rank 
    $q$ and some tuple $\bar v\in U(B)^\ell$ of parameters such that 
    $\err_T(\model B\phi{\bar x}{\bar v})\le\epsilon$, where $T$ is the input 
    sequence $T$, then  $\err_T(H)\le\epsilon$ for the hypothesis $H$
    returned by $\mathfrak L$ on input $T$.
  \item The algorithm runs in time $(\log n+d+t)^{O(1)}$ with only
    local access to $B$, where
    $n:=|U(B)|$ and $d:=|\Delta(B)|$ and $t:=|T|$.
  \item The hypotheses returned by $\mathfrak M$ can be evaluated in
    time $(\log n+d)^{O(1)}$ with only
    local access to $B$.
  \end{enumerate}
\end{theorem}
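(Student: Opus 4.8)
The plan is to adapt the proof of Theorem~\ref{theo:log} to the error-minimisation setting. The key insight from Lemma~\ref{lem:main} is that consistency with $T$ depends only on the local types $\ltp_{q^*,r^*}(B,\bar u_i\bar v)$ of the (training example, parameter) tuples, not on the global structure. I would first establish the analogue of Lemma~\ref{lem:main} for training error: given an optimal $\model B\phi{\bar x}{\bar v}\in\CC$ achieving error at most $\epsilon$, there is a syntactically $r^*$-local $\phi^*\in\Phi^*$ and parameters $\bar v^*\in N_{2\ell r^*}(T)^\ell$ such that $\err_T(\model B{\phi^*}{\bar x}{\bar v^*})\le\err_T(\model B\phi{\bar x}{\bar v})$. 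The construction is almost verbatim that of Lemma~\ref{lem:main}: contract $\bar v$ to the ``near'' parameters $\bar v^\circ\in N_{2\ell r^*}(T)^m$ reachable within distance $r^*$ from $N_{r^*}(T)$, discard the ``far'' parameters $\bar v^\bullet$ using the Local Composition Lemma, and define $\phi^\circ:=\bigvee_{p\in P}\theta_p$. The only change is that $P$ is now the set of indices on which the original hypothesis is \emph{correct and positive}, so that the new hypothesis agrees with the old one on every training example; consequently it has the \emph{same} error on $T$, which is at most $\epsilon$.

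Second, I would describe the algorithm $\mathfrak M$, which differs from $\mathfrak L$ only in its stopping criterion. Instead of returning the first consistent pair $(\phi^*,\bar v^*)$ and rejecting otherwise, $\mathfrak M$ iterates over all $\phi^*\in\Phi^*$ and all $\bar v^*\in N_{2\ell r^*}(T)^\ell$, computes $\err_T(\model B{\phi^*}{\bar x}{\bar v^*})$ for each by evaluating $N_{r^*}(\bar u_i\bar v^*)\models\phi^*(\bar u_i\smid\bar v^*)$ on every training example, and returns the pair minimising this training error. Since $\mathfrak M$ always has at least one candidate (the loops are nonempty), it never rejects and property~(1) holds. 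Property~(2) then follows immediately: by the error analogue of Lemma~\ref{lem:main}, the search space $\Phi^*\times N_{2\ell r^*}(T)^\ell$ contains a pair whose error is at most $\epsilon$, and $\mathfrak M$ returns a pair of minimal error, hence error at most $\epsilon$.

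Finally, I would verify the running-time bounds~(3) and~(4), which are unchanged from Theorem~\ref{theo:log}. The evaluation of $\phi^*$ on each local neighbourhood $N_{r^*}(\bar u_i\bar v^*)$ costs $(\log n+d)^{O(1)}$ as before; computing one training error costs this times $t$; and the two outer loops contribute the factor $|N|^\ell\cdot|\Phi^*|=(t+d)^{O(1)}$. Maintaining a running minimum over all candidates adds no asymptotic overhead, so the total is again $(\log n+d+t)^{O(1)}$, establishing~(3); property~(4) holds for exactly the reason it did in Theorem~\ref{theo:log}, namely $r^*$-locality of the returned $\phi^*$.

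The main obstacle is purely bookkeeping: one must check that the index set $P$ in the construction of $\phi^\circ$ is chosen so that the new hypothesis makes \emph{exactly} the same mistakes as the old one on $T$, rather than merely being consistent on positive examples. The correct choice is $P:=\{p\in[t]\mid \model B\phi{\bar x}{\bar v}(\bar u_p)=1\}$ (the indices predicted positive by the optimum, not the truly positive indices), so that $\model B{\phi^\circ}{\bar x}{\bar v^\circ}$ and $\model B\phi{\bar x}{\bar v}$ agree as functions on $\{\bar u_1,\ldots,\bar u_t\}$ and therefore have identical training error; the Claim in the proof of Lemma~\ref{lem:main} must be restated accordingly to assert that equal local types force equal \emph{predictions} (not equal labels $c_i$). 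Everything else carries over without modification.
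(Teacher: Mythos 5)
Your proposal is correct and follows essentially the same route as the paper: an error-minimising analogue of Lemma~\ref{lem:main} (the paper's Lemma~\ref{lem:main-err}) followed by the same brute-force search over $\Phi^*\times N_{2\ell r^*}(T)^\ell$, now returning the candidate of minimum training error. The only (harmless) divergence is in the proof of that key lemma: the paper restricts to the subsequence $S$ of correctly classified examples and invokes Lemma~\ref{lem:main} as a black box on $S$, whereas you rerun its construction on all of $T$ with $P$ redefined as the set of indices \emph{predicted} positive by the optimal hypothesis --- both yield a candidate in the search space with training error at most $\epsilon$.
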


To prove the theorem, we use the same notation as in the previous
section: we fix $\rho,k,\ell,q$ and define $q^*,r^*$ as before. We let
$B$ be a $\rho$-structure. We define $\Phi,\Phi^*$ and $\CC,\CC^*$ as before.
We let $t\ge 1$ and $\CT:=\big(U(B)^k\times\{0,1\}\big)^t$.

The proof of Theorem~\label{theo:log-rr} relies on the following
generalisation of Lemma~\ref{lem:main}.

\begin{lemma}\label{lem:main-err}
  Let $T\in\CT$ such that $\err_T(C)\le\epsilon$ for some $C\in\CC$. 
  Then there is a formula $\phi^*(\bar x\smid\bar y)\in\Phi^*$ and a tuple
  $\bar v^*\in N_{2\ell r^*}(T)^\ell$ such that 
  \[
  \err_T\big(\model B{\phi^*}{\bar
    x}{\bar v^*}\big)\le\epsilon.
  \]
\end{lemma}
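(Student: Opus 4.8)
The plan is to follow the proof of Lemma~\ref{lem:main} almost verbatim, replacing the global consistency requirement with a per-example error accounting. So let $T=\big((\bar u_1,c_1),\ldots,(\bar u_t,c_t)\big)$ and let $\phi(\bar x\smid\bar y)\in\Phi$, $\bar v\in U(B)^\ell$ witness $\err_T(C)\le\epsilon$ for $C=\model B\phi{\bar x}{\bar v}$. First I would carry out the exact same construction of the ``nearby'' parameters $v^{(1)},\ldots,v^{(m)}$ and the sets $N^{(0)},\ldots,N^{(m)}$, obtaining $\bar v^\circ=(v_1,\ldots,v_m)\in N_{2\ell r^*}(T)^m$, $\bar v^\bullet=(v_{m+1},\ldots,v_\ell)$, and $N^\circ=N^{(m)}$ satisfying \eqref{eq:1} and \eqref{eq:2}. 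The key structural facts that $N_{r^*}(\bar u_i\bar v)\cap N_{r^*}(\bar v^\bullet)=\emptyset$ for every $i\in[t]$, and hence (via the Local Composition Lemma, Corollary~\ref{cor:comp}, and Lemma~\ref{lem:local}) that the local type $\ltp_{q^*,r^*}(B,\bar u_i\bar v^\circ)$ determines the value $B\models\phi(\bar u_i\smid\bar v)$, carry over unchanged, since neither depends on the labels $c_i$.

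The one genuine change is in how the target formula is assembled. Rather than taking $P$ to be the set of positive examples and hoping it matches every label, I would define the hypothesis by a \emph{majority vote within each local type}. Concretely, partition $[t]$ into the equivalence classes induced by the map $i\mapsto\ltp_{q^*,r^*}(B,\bar u_i\bar v^\circ)$; within each class, the value $B\models\phi(\bar u_i\smid\bar v)$ is constant by the argument above, so $C=\model B\phi{\bar x}{\bar v}$ predicts the same label for all examples in that class. Hence $C$ errs on example $i$ exactly when $c_i$ disagrees with that constant prediction. Now let $P\subseteq[t]$ be the set of indices $p$ such that the \emph{majority label} among examples in $p$'s type is $1$, and set $\phi^\circ(\bar x\smid\bar y^\circ):=\bigvee_{p\in P}\theta_p(\bar x,\bar y^\circ)$ with $\theta_p$ the conjunction of the normalised formulas of $\ltp_{q^*,r^*}(B,\bar u_p\bar v^\circ)$, exactly as in \eqref{eq:3}. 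For any example $i$, $\model B{\phi^\circ}{\bar x}{\bar v^\circ}(\bar u_i)$ equals the majority label of $i$'s type, so within each type the number of errors of $\phi^\circ$ is at most the number of errors of any fixed labelling — in particular at most the number of errors made by $C$, which predicts a single constant value per type. Summing over all types gives $\err_T(\model B{\phi^\circ}{\bar x}{\bar v^\circ})\le\err_T(C)\le\epsilon$.

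Finally I would pad $\phi^\circ(\bar x\smid\bar y^\circ)$ up to the full parameter arity, reusing verbatim the syntactic manipulation of \eqref{eq:4} and \eqref{eq:5} that rewrites each relativised quantifier so that the resulting $\phi^*(\bar x\smid\bar y)$ stays syntactically $r^*$-local of quantifier rank at most $q^*$, and choose $\bar v^*=(v_1,\ldots,v_m,v,\ldots,v)$ with $v\in N_{2\ell r^*}(T)$ arbitrary, so that $\bar v^*\in N_{2\ell r^*}(T)^\ell$ and $\model B{\phi^*}{\bar x}{\bar v^*}$ agrees with $\model B{\phi^\circ}{\bar x}{\bar v^\circ}$ on all of $U(B)^k$. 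This yields $\err_T(\model B{\phi^*}{\bar x}{\bar v^*})\le\epsilon$ as required. I expect the only real subtlety to be the majority-vote comparison: one must argue that replacing $C$'s constant per-type prediction by the empirical majority label can only \emph{decrease} the error on each type, which is immediate once one observes that $C$ is itself a (possibly suboptimal) constant prediction on each type, so the optimal constant — the majority label — does at least as well. Everything else is a faithful copy of the proof of Lemma~\ref{lem:main}.
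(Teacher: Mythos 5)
Your proof is correct, but it takes a genuinely different route from the paper's. The paper disposes of this lemma in three lines: since $\err_T(C)\le\epsilon$, there is a subsequence $S$ of $T$ with $|S|\ge(1-\epsilon)t$ on which $C=\model B\phi{\bar x}{\bar v}$ is \emph{consistent}; applying Lemma~\ref{lem:main} to $S$ as a black box yields $\phi^*\in\Phi^*$ and $\bar v^*\in N_{2\ell r^*}(S)^\ell\subseteq N_{2\ell r^*}(T)^\ell$ with $\model B{\phi^*}{\bar x}{\bar v^*}$ consistent with $S$, hence with training error at most $\epsilon$ on $T$. You instead re-run the construction inside the proof of Lemma~\ref{lem:main} on all of $T$ and replace the disjunction over positive examples by a disjunction over local types whose majority label is $1$. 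Your key observations are sound: the construction of $\bar v^\circ,\bar v^\bullet,N^\circ$ and the disjointness facts \eqref{eq:1}, \eqref{eq:2} never use the labels, so the Local Composition Lemma still shows that $C$'s prediction is constant on each class of the equivalence relation $i\sim j\iff\ltp_{q^*,r^*}(B,\bar u_i\bar v^\circ)=\ltp_{q^*,r^*}(B,\bar u_j\bar v^\circ)$, and the majority label is the error-minimising constant prediction on each class, so it does at least as well as $C$ classwise; the padding to arity $\ell$ is unaffected. What each approach buys: the paper's reduction is much shorter and keeps Lemma~\ref{lem:main} as the single place where the type machinery lives; your argument is self-contained and actually produces the optimal hypothesis among those that are constant on each local type, so it shows $\err_T(\model B{\phi^*}{\bar x}{\bar v^*})\le\err_T(C)$ directly (a marginally sharper statement, though not needed here). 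Both are valid proofs of the lemma as stated.
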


\begin{proof}
  Let $T=\big((\bar u_1,c_1),\ldots,(\bar u_t,c_t)\big)$.  Let
  $\phi(\bar x\smid\bar y)\in\Phi$ and
  $\bar v=(v_1,\ldots,v_\ell)\in U(B)^\ell$ such that
  $\err_T\big(\model B\phi{\bar x}{\bar v}\big)\le\epsilon$. Then
  there exists a subsequence 
  $S$ of $T$ such that $|S|\ge(1-\epsilon)\cdot t$ and $\model
  B\phi{\bar x}{\bar v}$ is consistent with $S$.

  By Lemma~\ref{lem:main}, there is a formula $\phi^*(\bar x\smid\bar y)\in\Phi^*$ and a tuple
  $\bar v^*\in N_{2\ell r^*}(S)^\ell$ such that $\model B{\phi^*}{\bar
    x}{\bar v^*}$ is consistent with $S$. Then $\err_T\big(\model B{\phi^*}{\bar
    x}{\bar v^*}\big)\le\epsilon$.
\end{proof}
	
\begin{figure}
  \textbf{Algorithm} $\mathfrak M$
  \begin{algorithmic}[1]
    \Require 
    Training sequence
    $T\in\CT$,\newline
    local access to background structure $B$
    \State $N\gets N_{2\ell r^*}(T)$
    \State $minerr\gets t+1$
    \ForAll{$\bar v^*\in N^\ell$}
    \ForAll{$\phi^*(\bar x\smid\bar y)\in\Phi^*$}
    \State $err\gets0$
    \ForAll{$(\bar u,c)\in T$}
    \If{$\hspace{-1em}\begin{array}[t]{c@{\,}l@{\,}}&(N_{r^*}(\bar u\bar v^*)\models\phi^*(\bar u,\bar v^*)\text{ and
                            }c=0)\\
           \text{or}&(N_{r^*}(\bar u\bar v^*) \not\models\phi^*(\bar
                      u,\bar v^*)\text{ and }c=1)\end{array}$}
                  \smallskip
    \State $err\gets\err+1$
    \EndIf
    \EndFor
    \If{$err < minerr$}
    \State$minerr\gets err$
    \State$\phi_{\min}\gets\phi^*$
    \State$\bar v_{\min}\gets\bar v^*$
    \EndIf
    \EndFor
    \EndFor
    \State\Return $\phi_{\min},\bar v_{\min}$
  \end{algorithmic}
  \caption{Learning algorithm $\mathfrak M$ of Theorem~\ref{theo:log-err}}
  \label{fig:M}
\end{figure}

\begin{proof}[Proof of Theorem~\ref{theo:log-err}]
  The pseudocode for our
  learning algorithm $\mathfrak M$ is shown in Figure~\ref{fig:L}. The
  algorithm is very similar to the algorithm $\mathfrak L$ of
  Theorem~\ref{theo:log}, except that we do not check for consistency,
  but count the errors of all hypotheses and return the one with
  minimum error. The runtime of $\mathfrak M$ is essentially the same
  as that of $\mathfrak L$.
\end{proof}

There is an analogous generalisation of Theorem~\ref{theo:uni}.

\begin{theorem}\label{theo:uni-err}
  Let $k,\ell,q\in\NN$. Then there is a $q^*\in\NN$ and
  a learning algorithm $\mathfrak
  M$ for the $k$-ary learning problem over some (possibly infinite) background structure
  $B$ with properties 1) and 2) of Theorem~\ref{theo:log-err} and the
  following two properties.
  \begin{enumerate}
  \item[3u)]The algorithm runs in time $(d+t)^{O(1)}$ under the uniform
    cost measure with only
    local access to $B$, where
    $d:=|\Delta(B)|$ and $t:=|T|$.
  \item[4u)] The hypotheses returned by $\mathfrak M$ can be evaluated in
    time $d^{O(1)}$ under the uniform cost measure with only
    local access to $B$.
  \end{enumerate}
\end{theorem}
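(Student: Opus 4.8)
The plan is to observe that Theorem~\ref{theo:uni-err} stands to Theorem~\ref{theo:log-err} exactly as Theorem~\ref{theo:uni} stands to Theorem~\ref{theo:log}: the learning algorithm is literally the same algorithm $\mathfrak M$ of Figure~\ref{fig:M}, and the only things that change are the cost accounting and the admission of infinite background structures. Properties 1) and 2) are therefore inherited verbatim from Theorem~\ref{theo:log-err}, since they are statements about the output of $\mathfrak M$ and make no reference to the cost measure; in particular property 2) rests only on Lemma~\ref{lem:main-err}, which is purely combinatorial and holds for arbitrary $\rho$-structures.

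For 3u), I would rerun the running-time analysis from the proof of Theorem~\ref{theo:log-err} while replacing the logarithmic storage cost $O(\log n)$ of a single element by the constant cost of the uniform measure. Concretely, the bound $|N_{r^*}(\bar u\bar v^*)|\le(k+\ell)\cdot 2d^{r^*}$ is unchanged, so the representation size of this substructure is now $O((k+\ell)\cdot d^{r^*})=d^{O(1)}$, and testing $N_{r^*}(\bar u\bar v^*)\models\phi^*$ takes time polynomial in $d^{O(1)}$. Since $|N|\le 2tkd^{2\ell r^*}=(t+d)^{O(1)}$ and $|\Phi^*|=O(1)$, the two outer loops contribute a factor $(t+d)^{O(1)}$, and the overall running time becomes $(d+t)^{O(1)}$, with the $\log n$ factors of Theorem~\ref{theo:log-err}(3) simply absent. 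For 4u), evaluation of a returned hypothesis again only inspects $N_{r^*}(\bar u\bar v^*)$ because $\phi^*$ is $r^*$-local, and this substructure has size $d^{O(1)}$, giving evaluation time $d^{O(1)}$.

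The only genuinely new point, and the step I would treat most carefully, is the extension to infinite $B$. Here I would verify that $\mathfrak M$ never needs to touch more than finitely much of $B$: the set $N=N_{2\ell r^*}(T)$ is computed purely by neighbourhood queries starting from the finitely many training tuples, and since $\Delta(B)\le d<\infty$ every such neighbourhood is finite; likewise each consistency test only reads the finite substructure $N_{r^*}(\bar u\bar v^*)$. Thus no quantity depending on $n=|U(B)|$ enters the computation once the $\log n$ storage cost is removed, and the algorithm remains well-defined and correct even when $U(B)$ is infinite. I expect no real obstacle beyond this bookkeeping, since all the underlying locality machinery—Gaifman normal form, the Local Composition Lemma (Corollary~\ref{cor:comp}), and Lemma~\ref{lem:main-err}—is stated for arbitrary, and hence also infinite, $\rho$-structures.
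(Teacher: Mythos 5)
Your proposal is correct and matches the paper's proof, which simply reuses the algorithm $\mathfrak M$ and observes that the $\log$-factors in the running-time analysis disappear under the uniform cost measure. Your additional check that $\mathfrak M$ only ever touches a finite, degree-bounded portion of a possibly infinite $B$ is a sound elaboration of a point the paper leaves implicit.
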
	

\begin{proof}
  The proof is the same as that of
  Theorem~\ref{theo:log-err}, except that in the analysis of the
  algorithm the log-factor disappears because of the uniform cost measure.
\end{proof}

\section{PAC Learning}
\label{sec:pac}

In this section, we sketch some of the basic principles of algorithmic
learning theory and show how they apply in our context. For more
background, we refer the reader to \cite{bluhopkan16,computationalLearningTheory,shaben14}.

So far, we have focussed on the training error of our learning
algorithms. But of course that is not the error we are mainly
interested in; our goal is to generate hypotheses that
with a low \emph{generalisation error}.  \emph{Probably approximately correct
 (PAC)} learning gives us a framework for analysing the generalisation
error theoretically.

Consider a learning problem with instance space $\BU$ where we want to
learn an unknown target concept $C^*:\BU\to\{0,1\}$. As before, we are
mainly interested in the case that $\BU=U(B)^k$ for some background
structure $B$ and that $C^*=\model B{\phi}{\bar x}{\bar v}$ for some
first-order formula $\phi(\bar x\smid\bar y)$ and parameter tuple $\bar v\in
U(B)^\ell$.
The basic assumption of PAC learning is is that there is an (unknown)
probability distribution $\CD$ on the instance space ${\BU}$ and that
instances are drawn independently from this distribution; the training
instances as well as the new instances that we want to classify with
our hypothesis. We define the \emph{generalisation error} of a
hypothesis $H$ to be the probability that $H$ is wrong on a random
instance, that is,
\begin{equation}\label{eq:err}
\err_{\CD,C^*}(H):=\Pr_{x\sim\CD}(H(x)\neq C^*(x)).
\end{equation}
We allow our algorithm to make
a small generalisation error controlled by the
\emph{error parameter $\epsilon$}. As the hypothesis $H$ depends on
the randomly chosen training examples, we must allow for an error
caused by unusually bad examples as well.  We usually quantify this
second type of error by the \emph{confidence parameter $\delta$}. Our
goal is to generate a hypothesis $H=H(T,\epsilon,\delta)$, which of
course depends on the training sequence
$T\subseteq(\BU\times\{0,1\})^t$ and may also depend on $\epsilon$ and
$\delta$, such that
\begin{equation}
  \label{eq:pac}%
  \Pr_{T\sim\CD}(\err_{\CD,C^*}(H(\epsilon,\delta,T))\le\epsilon)\ge1-\delta.
\end{equation}
Here $T\sim\CD$ indicates that the training examples are drawn
independently from $\CD$. Intuitively, a hypothesis satisfying
\eqref{eq:pac} is \emph{probably} (referring to the high confidence of at least
$1-\delta$) \emph{approximately} (referring to the low error of at most
$\epsilon$) \emph{correct}.

Ideally, we would like $\mathfrak L$ that for all target concepts
$C^*\subseteq\BU$, all probability
distributions $\CD$ on $\BU$, and all $\epsilon,\delta>0$ generates
hypotheses that are probably approximately correct. This is something we usually cannot
achieve. Thus we make assumptions about the target concept, which we
formalise by considering target concepts from a \emph{concept class}
$\CC$. We also specify the \emph{hypothesis class $\CH$} and a
function $t$ that determines the number of training examples required
by the algorithm. We say that a learning algorithm $\mathfrak L$ is a
\emph{$(\BU,\CC,\CH,t)$-PAC-learning algorithm} if for all probability distributions
$\CD$ on $\BU$, all target concepts $C^*\in\CC$, and all
$\epsilon,\delta>0$, given a sequence $T$ of $t(\epsilon,\delta)$
training examples and $\epsilon,\delta$, the algorithm generates a
hypothesis $H(T,\epsilon,\delta)\in\CH$ that satisfies \eqref{eq:pac}.

\subsection{Sample Size Bound}
It is a basic insight from computational learning theory that if the
hypothesis class $\CH$ is finite we need roughly $\log|\CH|$
training examples to achieve probable approximate correctness. The
following well-known lemma makes this precise. For a proof, see for
example \cite{shaben14}. 

\begin{lemma}[Sample Size Bound]\label{lem:ssb}
  Suppose that the hypothesis class $\CH$ is finite and that the
  length $t$ of the training sequence satisfies
  \begin{equation}
    \label{eq:6:1}
    t\ge\frac{\ln(|\CH|/\delta)}{\epsilon}. 
  \end{equation}
  Then for all probability distributions $\CD$ on $\BU$ and all target functions
  $C^*$, 
  \begin{align*}
  &\Pr_{T\sim\CD}\Big(
\err_{\CD,C^*}(H)<\epsilon\text{ for all $H\in\CH$ consistent with
    }T\big)\Big)\\
   &\ge 1-\delta.
  \end{align*}
\end{lemma}

This means that if $\CH$ is finite and the training sequence is long
enough (as specified in \eqref{eq:6:1}) then with high confidence,
every consistent hypothesis will have low generalisation error.

If we combine this lemma with Theorem~\ref{theo:log}, we obtain the
following result.

\begin{theorem}\label{theo:pac-log}
  Let $k,\ell,q\in\NN$.  Then there are $q^*,r^*,s^*\in\NN$ and
  a learning algorithm $\mathfrak
  L$ for the $k$-ary learning problem over some finite background $\rho$-structure
  $B$ with the following properties. 
  \begin{enumerate}
  \item Let $\bar x,\bar y$ be tuples of length $k,\ell$,
    respectively. Let $\CC$ be the class of all
    $\model B{\phi}{\bar x}{\bar v}$ for a $\FO[\rho]$-formula
    $\phi(\bar x\smid\bar y)$ of quantifier rank $q$ and
    $\bar v\in U(B)^\ell$, and let $\CH$ be the class of all
    $\model B{\phi^*}{\bar x}{\bar v^*}$ for a syntactically
    $r^*$-local $\FO[\rho]$-formula $\phi^*(\bar x\smid\bar y)$ of
    quantifier rank $q^*$ and $\bar v^*\in U(B)^\ell$. Let
    \[
    t(n,\epsilon,\delta)=s^*\ceil{\frac{\log (n/\delta)}{\epsilon}},
    \]
    where $n:=|B|$.

    Then $\mathfrak L$ is a $(U(B)^k,\CC,\CH,t)$-PAC-learning
    algorithm.
    \item The algorithm runs in time $(\log n+d+1/\epsilon+\log1/\delta)^{O(1)}$ with only
    local access to $B$, where
    $d:=|\Delta(B)|$.
   \end{enumerate}
\end{theorem}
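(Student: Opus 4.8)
The plan is to combine the error-minimizing learning algorithm from the previous subsection with the Sample Size Bound (Lemma~\ref{lem:ssb}). The key observation is that the relevant hypothesis class $\CH$ is \emph{finite}: although the parameter tuples $\bar v^*$ range over $U(B)^\ell$, which has size $n^\ell$, the algorithm $\mathfrak{L}$ from Theorem~\ref{theo:log} only ever searches parameters in $N_{2\ell r^*}(T)^\ell$, and by Lemma~\ref{lem:main} this restriction loses nothing (a consistent hypothesis with parameters from this local neighbourhood always exists whenever any consistent hypothesis in $\CC$ exists). The number of syntactically $r^*$-local formulas of quantifier rank $q^*$ is a constant $|\Phi^*| = O(1)$ depending only on $k,\ell,q,\rho$, and the number of admissible parameter tuples is $|N_{2\ell r^*}(T)|^\ell \le (2tk\,d^{2\ell r^*})^\ell$. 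So the \emph{effective} hypothesis class the algorithm can output has size polynomial in $t$ and $d$.

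First I would make the hypothesis class $\CH$ precise as stated in the theorem: all $\model B{\phi^*}{\bar x}{\bar v^*}$ for $\phi^*\in\Phi^*$ and $\bar v^*\in U(B)^\ell$. This $\CH$ has size at most $|\Phi^*|\cdot n^\ell$, so $\log|\CH| = O(\log n)$ treating $k,\ell,q$ (and hence $q^*,r^*,|\Phi^*|$) as constants. Plugging into \eqref{eq:6:1}, a training sequence of length $t \ge \ln(|\CH|/\delta)/\epsilon = O((\log(n/\delta))/\epsilon)$ suffices, which is exactly the form $t(n,\epsilon,\delta) = s^*\ceil{\log(n/\delta)/\epsilon}$ for a suitable constant $s^*$ absorbing $\ell$ and $\log|\Phi^*|$. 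Next I would verify the PAC guarantee: given a target $C^*\in\CC$, run $\mathfrak L$ on the sampled $T$. By Theorem~\ref{theo:log}(2), since $C^*$ itself is consistent with $T$, the algorithm returns some hypothesis $H\in\CH$ consistent with $T$ (it never rejects). By the Sample Size Bound applied to the finite class $\CH$, with probability at least $1-\delta$ over the draw of $T$, \emph{every} $H\in\CH$ consistent with $T$ satisfies $\err_{\CD,C^*}(H)<\epsilon$; in particular the one $\mathfrak L$ returns does. This establishes \eqref{eq:pac} and hence that $\mathfrak L$ is a $(U(B)^k,\CC,\CH,t)$-PAC-learning algorithm, proving part~(1).

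For part~(2), the running time follows directly from Theorem~\ref{theo:log}(3): the algorithm runs in time $(\log n + d + t)^{O(1)}$, and substituting the sample size $t = O((\log(n/\delta))/\epsilon)$ gives $(\log n + d + (\log(n/\delta))/\epsilon)^{O(1)} = (\log n + d + 1/\epsilon + \log(1/\delta))^{O(1)}$ as claimed, still with only local access to $B$. The main subtlety — and the one point worth stating carefully rather than the routine bookkeeping — is the interplay between the size of $\CH$ used in the sample bound and the set of parameters the algorithm actually searches. The sample bound must be applied to the \emph{full} class $\CH$ with parameters over all of $U(B)^\ell$ (so that $\log|\CH| = O(\log n)$ controls the generalisation error of \emph{whatever} the algorithm outputs), while the algorithm's \emph{efficiency} relies on Lemma~\ref{lem:main} to restrict the search to the polynomially-many local parameters in $N_{2\ell r^*}(T)$. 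These two roles of $\CH$ are compatible precisely because the algorithm's output always lies in $\CH$ and is consistent with $T$, so the Sample Size Bound applies to it. I do not expect any genuine obstacle here; it is a clean composition of the algorithmic result with the standard occupancy/union-bound argument underlying Lemma~\ref{lem:ssb}.
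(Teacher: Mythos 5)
Your proof is correct and follows essentially the same route as the paper: run the consistent-hypothesis algorithm $\mathfrak L$ of Theorem~\ref{theo:log} on the sampled training sequence and apply the Sample Size Bound (Lemma~\ref{lem:ssb}) to the finite hypothesis class $\CH$ of size at most $|\Phi^*|\cdot n^\ell$, which gives the $t = O(\log(n/\delta)/\epsilon)$ sample bound and, together with Theorem~\ref{theo:log}(3), the stated running time --- the paper's own proof is just a two-sentence version of this. (Your opening reference to the ``error-minimizing'' algorithm is a small slip, since that is $\mathfrak M$, used for the agnostic variant in Theorem~\ref{theo:pac-log-err}; the rest of your argument correctly uses $\mathfrak L$.)
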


\begin{proof}
  Let $\mathfrak L$ be the algorithm of Theorem~\ref{theo:log}. Given
  a training sequence $T$ of length $t$ consistent with a concept from
  $\CC$, it generates a hypothesis $H\in\CH$ consistent with $T$. It
  follows from the Lemma~\ref{lem:ssb} that $H$ is
  probably approximately correct.
\end{proof}

\subsection{VC Dimension}
We cannot apply the sample size bound in a situation where the
background structure is infinite. But there is an improved sample size
bound that even holds for (some) infinite hypothesis classes. For this
bound, we replace the factor $\ln |\CH|$ in the sample size bound by
the VC-dimension of $\CH$. The VC-dimension is a combinatorial measure
for the complexity of a set system.

Let $\BU$ be a set and $\CH\subseteq 2^{\BU}$. A set $V\subseteq\BU$
is \emph{shattered} by $\CH$ if for every $I:V\to\{0,1\}$ there is an
$H\in\CH$ such that $I$ is the restriction of $H$ to $V$.
The \emph{VC-dimension} of $\CH$, denoted by $\VC(\CH)$,  is the maximum size of a set
shattered by $\CH$, or $\infty$ if this maximum does not exist. $\CH$
has \emph{finite VC-dimension} if $\VC(\CH)<\infty$. Observe that for
finite $\CH$ we have $\VC(\CH)\le \log|\CH|$.

The following
lemma due to Blumer, Ehrenfeucht, Haussler and
Warmuth~\cite{bluehrhau+89} relates VC-dimension to PAC-learning.

\begin{lemma}[VC-Dimension Sample Size Bound \cite{bluehrhau+89}]\label{lem:vc-ssb}
  There is a constant $c$ such that the following holds. Suppose that the hypothesis class $\CH$ has finite VC-dimension and that the
  length $t$ of the training sequence satisfies
  \begin{equation}
    \label{eq:6:2}
    t\ge c\cdot\frac{\VC(\CH)+\ln 1/\delta}{\epsilon}. 
  \end{equation}
  Then for all probability distributions $\CD$ on $\BU$ and all target functions
  $C^*$, 
  \begin{align*}
  &\Pr_{T\sim\CD}\Big(
\err_{\CD,C^*}(H)<\epsilon\text{ for all $H\in\CH$ consistent with
    }T\big)\Big)\\
   &\ge 1-\delta.
  \end{align*}
\end{lemma}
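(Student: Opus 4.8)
The plan is to prove the equivalent tail estimate: with $\epsilon,\delta$ and $t$ as in \eqref{eq:6:2}, I would show that the probability over $T\sim\CD$ that some $H\in\CH$ is simultaneously consistent with $T$ and satisfies $\err_{\CD,C^*}(H)\ge\epsilon$ is at most $\delta$. Write $d:=\VC(\CH)$ and call a hypothesis \emph{bad} if its generalisation error is at least $\epsilon$; the goal is to bound the probability that a bad hypothesis survives as a consistent hypothesis, since the complement is exactly the event in the statement.

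The first main step is \emph{symmetrisation} via a ghost sample. I would draw a second independent sample $T'$ of the same length $t$ and compare the event $A$ that some bad $H$ is consistent with $T$ against the event $B$ that some bad $H$ is consistent with $T$ yet errs on at least $\epsilon t/2$ of the examples in $T'$. A Chernoff/Markov estimate shows that, since a bad $H$ has error $\ge\epsilon$, with probability at least $1/2$ it errs on at least $\epsilon t/2$ points of the fresh sample $T'$ (this is where a mild lower bound like $t\ge 8/\epsilon$ is used), whence $\Pr(A)\le 2\Pr(B)$. This reduces an unbounded quantification over the possibly infinite class $\CH$ to a statement about a sample of size $2t$.

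The second main step is a \emph{permutation argument combined with Sauer--Shelah}. I would fix the combined multiset $Z$ of $2t$ points and argue conditionally on $Z$. On $Z$, the number of distinct dichotomies realisable by hypotheses in $\CH$ is bounded, by the Sauer--Shelah lemma, by the growth function $\Pi_\CH(2t)\le(2et/d)^d$, a quantity polynomial in $t$. For each fixed dichotomy realising at least $\epsilon t/2$ errors on $Z$, the probability, over a uniformly random swapping of the $i$-th element of $T$ with the $i$-th element of $T'$, that all errors land in the $T'$ half (so that $H$ looks consistent on $T$) is at most $2^{-\epsilon t/2}$. A union bound over the at most $(2et/d)^d$ dichotomies then yields $\Pr(B)\le (2et/d)^d\,2^{-\epsilon t/2}$.

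Combining the two steps gives $\Pr(A)\le 2(2et/d)^d 2^{-\epsilon t/2}$, and it remains to verify that the hypothesised bound \eqref{eq:6:2} forces the right-hand side below $\delta$: taking logarithms, this amounts to $\epsilon t/2\ge\log_2(2/\delta)+d\log_2(2et/d)$, and since the $\log_2 t$ term grows strictly slower than $t$, a sufficiently large absolute constant $c$ in $t\ge c(\VC(\CH)+\ln(1/\delta))/\epsilon$ absorbs it. I expect the main obstacle to be the symmetrisation step: passing from ``consistent on $T$'' for an arbitrary (possibly infinite) class to a finite combinatorial count requires the ghost sample together with a careful argument that only a factor of $2$ is lost, and the subsequent swapping argument must be set up so that conditioning on $Z$ renders the dichotomy count finite while keeping the per-coordinate swap probabilities independent.
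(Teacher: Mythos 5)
The paper does not actually prove this lemma --- it is imported verbatim from Blumer, Ehrenfeucht, Haussler and Warmuth \cite{bluehrhau+89} --- so there is no internal proof to compare against. What you have written is the standard double-sampling proof of that cited theorem: symmetrisation with a ghost sample (losing a factor $2$, valid once $t\ge 8/\epsilon$), conditioning on the combined $2t$-point sample so that Sauer--Shelah caps the number of realisable dichotomies by $(2et/d)^d$, the coordinate-swapping argument contributing $2^{-\epsilon t/2}$ per dichotomy, and a union bound. All of these steps are sound, and this is exactly the route taken in \cite{bluehrhau+89}.

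The one genuine gap is your final absorption step. You need $\epsilon t/2 \ge \log_2(2/\delta) + d\log_2(2et/d)$, and with $t$ of order $(d+\ln(1/\delta))/\epsilon$ the term $d\log_2 t$ contains $d\log_2(1/\epsilon)$, which is unbounded as $\epsilon\to 0$ while the left-hand side stays of order $c\,d$; no absolute constant $c$ in \eqref{eq:6:2} absorbs it. What your argument actually establishes is the bound under the hypothesis $t\ge \frac{c}{\epsilon}\bigl(\VC(\CH)\ln(1/\epsilon)+\ln(1/\delta)\bigr)$, which is the form in which Blumer et al.\ state their theorem. The literal inequality \eqref{eq:6:2} without the $\ln(1/\epsilon)$ factor is in fact not attainable for the ``every consistent hypothesis'' guarantee in general (the logarithmic factor is known to be necessary for some classes, by $\epsilon$-net lower bounds and the Haussler--Littlestone--Warmuth examples), so the discrepancy lies in the paper's statement of the lemma rather than in your proof; the extra $\ln(1/\epsilon)$ is harmless for every use the paper makes of the result.
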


We can apply this improved sample size bound in our setting for
bounded degree structures, because the VC-dimension of first-order
definable models is bounded on bounded degree structures.

\begin{lemma}[\cite{grohe2004learnability}]\label{lem:vc-bounded-deg}
  Let $d,k,\ell,q\in\NN$. Then there is an $m\in\NN$ such that the following
  holds. Let $B$ be a structure of maximum degree $\Delta(B)\le d$,
  let $\CC$ be the class of all
    $\model B{\phi}{\bar x}{\bar v}$ for a $\FO[\rho]$-formula
    $\phi(\bar x\smid\bar y)$ of quantifier rank $q$ with $|\bar
    x|=k,|\bar y|=\ell$ and
    $\bar v\in U(B)^\ell$. Then $\VC(\CC)\le m$.
\end{lemma}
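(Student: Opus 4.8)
The plan is to reduce the statement to a purely local combinatorial bound on the number of distinct behaviours a bounded-degree structure can exhibit, using Gaifman's Locality Theorem as the organising tool. By the same argument that produced $q^*, r^*$ in Section~4, every $\FO[\rho]$-formula $\phi(\bar x\smid\bar y)$ of quantifier rank at most $q$ with $|\bar x\bar y|=k+\ell$ is equivalent to a Boolean combination of basic local sentences and syntactically local formulas, all of locality radius at most some $r^*$ and local quantifier rank at most some $q^*$ depending only on $q,k,\ell,\rho$. Since $B$ has maximum degree at most $d$, the basic local sentences take a value that is the same for \emph{every} choice of parameters $\bar v$ (they are sentences, not depending on the free variables at all), so they only contribute a bounded multiplicative factor to $|\CC|$ and cannot increase the VC-dimension beyond an additive constant. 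The real content is therefore the contribution of the syntactically local formulas $\phi^*(\bar x\smid \bar y)$ evaluated at $(\bar u,\bar v)$, which by $r^*$-locality depend only on the isomorphism type of the substructure $N_{r^*}(\bar u\bar v)$.

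The key step I would carry out is to bound the number of distinct local $(q^*,r^*)$-types realised over $B$. In a structure of maximum degree at most $d$, the $r^*$-neighbourhood $N_{r^*}(\bar w)$ of any $(k+\ell)$-tuple $\bar w$ has at most $(k+\ell)\cdot 2d^{r^*}$ elements, so up to isomorphism there are only finitely many possible such neighbourhoods (at most some number $N_0=N_0(d,k,\ell,r^*,\rho)$, since the vocabulary $\rho$ is finite and the universe is bounded). Consequently the map $\bar w\mapsto\ltp_{q^*,r^*}(B,\bar w)$ takes at most $N_0$ distinct values. By Lemma~\ref{lem:local}, whether $B\models\phi(\bar u\smid\bar v)$ is determined by $\ltp_{q^*,r^*}(B,\bar u\bar v)$ together with the (parameter-independent) truth values of the basic local sentences; hence the whole concept class $\CC$ contains at most $M:=2^{(\text{number of basic local sentences})}\cdot 2^{N_0}$ \emph{distinct subsets} of $U(B)^k$ once we range over all formulas and all parameters. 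This is the step I expect to be the main obstacle to state cleanly, because one must be careful that different choices of $\bar v$ only ever induce one of finitely many behaviours on the instance space, and this requires invoking the Local Composition Lemma (Corollary~\ref{cor:comp}) to control how the type of $\bar u\bar v$ decomposes into the type of $\bar u$-neighbourhoods and $\bar v$-neighbourhoods.

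With $|\CC|\le M$ established, the conclusion is immediate: a shattered set of size $s$ requires $|\CC|\ge 2^s$, so $\VC(\CC)\le\log M$, and we may take $m:=\floor{\log M}$, which depends only on $d,k,\ell,q$ and $\rho$ as required. I would record the final inequality as
\[
\VC(\CC)\le\log|\CC|\le\log M=:m,
\]
noting that $M$ is finite and bounded in terms of $d,k,\ell,q$ alone. The essential simplification throughout is that bounded degree forces a bound on neighbourhood size, which via locality forces a bound on the number of realised types and hence on the number of distinct definable concepts; no probabilistic or analytic input is needed, only the combinatorics of finitely many small neighbourhoods.
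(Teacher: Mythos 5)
First, a remark on the comparison: the paper does not prove Lemma~\ref{lem:vc-bounded-deg} at all --- it imports it from \cite{grohe2004learnability} --- so your attempt can only be measured against the standard argument for that result, not against a proof in the text.

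Your use of Gaifman's theorem and the observation that the basic local sentences are parameter-independent are both fine, but the central counting step is wrong: the concept class $\CC$ is \emph{not} of bounded size, so the route $\VC(\CC)\le\log|\CC|\le\log M$ with $M$ a constant cannot work. Concretely, let $B$ be a perfect matching on $n$ vertices and $\phi(x\smid y):=E(x,y)$, a quantifier-free (hence $1$-local) formula over a degree-$1$ structure. Every parameter $v$ has the same local type, yet $\model B\phi xv$ is the singleton consisting of the partner of $v$, so $\CC$ contains $n$ pairwise distinct concepts. The flawed inference is from ``the map $\bar w\mapsto\ltp_{q^*,r^*}(B,\bar w)$ takes at most $N_0$ values'' to ``$\CC$ has at most (a constant times) $2^{N_0}$ members'': a concept $\model B\phi{\bar x}{\bar v}$ is determined by the \emph{function} $\bar u\mapsto\ltp_{q^*,r^*}(B,\bar u\bar v)$ together with a set of accepting types, and that function genuinely varies with $\bar v$ --- there can be $\Theta(n^{\ell})$ distinct ones, even though each takes only $N_0$ values.

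The repair is to bound not $|\CC|$ but the number of traces of $\CC$ on a candidate shattered set $V=\{\bar u_1,\dots,\bar u_s\}$, and to show this is polynomial in $s$ with exponent depending only on $\ell$ and the other fixed data. This is exactly the mechanism of Lemma~\ref{lem:main}: the components of $\bar v$ lying within distance roughly $2\ell r^*$ of $V$ can occupy at most $|N_{2\ell r^*}(V)|\le c\cdot s$ positions for a constant $c=c(d,k,\ell,r^*)$, hence at most $(cs)^{\ell}$ placements in total, while the far components influence the truth of the syntactically local formulas only through their own local type (by the Local Composition Lemma, Corollary~\ref{cor:comp}), of which there are boundedly many; together with the boundedly many formulas up to equivalence this gives at most $C\cdot(cs)^{\ell}$ traces on $V$. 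Shattering forces $2^{s}\le C\cdot(cs)^{\ell}$, which bounds $s$ by some $m$ depending only on $d,k,\ell,q,\rho$. Your finite-neighbourhood observation is still the engine of the proof, but it bounds the positions of the \emph{near} parameters relative to $V$, not the cardinality of $\CC$.
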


Grohe and Tur\'an \cite{grohe2004learnability} bounded the
VC-dimension of first-order definable concept classes on a wide range
of further classes structures, among them planar graphs and graphs of
bounded tree width. Adler and Adler \cite{adler2010nowhere} extended this to
all nowhere dense graph classes. 

Using the VC-Dimension Sample Size Bound and the previous lemma, we
obtain the following theorem.

\begin{theorem}\label{theo:vc-uni}
  Let $d,k,\ell,q\in\NN$.  Then there are $q^*,r^*,s^*\in\NN$ and
  a learning algorithm $\mathfrak
  L$ for the $k$-ary learning problem over some (possibly infinite) background $\rho$-structure
  $B$ of maximum degree at most $d$ with the following properties. 
  \begin{enumerate}
  \item Let $\bar x,\bar y$ be tuples of length $k,\ell$,
    respectively. Let $\CC$ be the class of all
    $\model B{\phi}{\bar x}{\bar v}$ for a $\FO[\rho]$-formula
    $\phi(\bar x\smid\bar y)$ of quantifier rank $q$ and
    $\bar v\in U(B)^\ell$, and let $\CH$ be the class of all
    $\model B{\phi^*}{\bar x}{\bar v^*}$ for a syntactically
    $r^*$-local $\FO[\rho]$-formula $\phi^*(\bar x\smid\bar y)$ of
    quantifier rank $q^*$ and $\bar v^*\in U(B)^\ell$. Let
    \[
    t(\epsilon,\delta)=s^*\ceil{\frac{\log1/\delta}{\epsilon}}.
    \]
    Then $\mathfrak L$ is a $(U(B)^k,\CC,\CH,t)$-PAC-learning
    algorithm.
    \item The algorithm runs in time
      $(1/\epsilon+\log1/\delta)^{O(1)}$ under the uniform cost
      measure and with only
    local access to $B$.
   \end{enumerate}
\end{theorem}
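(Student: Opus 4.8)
The plan is to combine Theorem~\ref{theo:uni} (the uniform-cost version of the model-learning algorithm) with the VC-Dimension Sample Size Bound (Lemma~\ref{lem:vc-ssb}) and the VC-dimension bound for first-order definable concept classes on bounded-degree structures (Lemma~\ref{lem:vc-bounded-deg}). The target concept $C^*$ lies in $\CC$, the class of concepts definable by $\FO[\rho]$-formulas of quantifier rank $q$; the hypotheses our algorithm produces lie in $\CH$, the class of concepts definable by syntactically $r^*$-local formulas of quantifier rank $q^*$, where $q^*,r^*$ are the constants furnished by Theorem~\ref{theo:uni}. First I would invoke Lemma~\ref{lem:vc-bounded-deg}, applied with the parameters $q^*,k,\ell$ (rather than $q$), to conclude that $\VC(\CH)\le m$ for some constant $m=m(d,k,\ell,q^*)$; crucially the syntactically $r^*$-local formulas of quantifier rank $q^*$ are in particular ordinary $\FO[\rho]$-formulas of bounded quantifier rank, so the lemma applies to $\CH$ as well as to $\CC$.

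Having bounded $\VC(\CH)$ by a constant, I would choose $s^*$ to be (a suitable multiple of) this constant together with the absolute constant $c$ from Lemma~\ref{lem:vc-ssb}, so that the sample size
\[
t(\epsilon,\delta)=s^*\ceil{\frac{\log1/\delta}{\epsilon}}
\]
satisfies the hypothesis \eqref{eq:6:2} of the VC-Dimension Sample Size Bound. The learning algorithm $\mathfrak L$ is the algorithm of Theorem~\ref{theo:uni}: on input a training sequence $T$ of length $t(\epsilon,\delta)$, since $C^*\in\CC$ is consistent with $T$, property~2) of Theorem~\ref{theo:uni} guarantees that $\mathfrak L$ returns (never rejects) a hypothesis $H\in\CH$, and by property~1) this $H$ is consistent with $T$. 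Lemma~\ref{lem:vc-ssb} then tells us that with probability at least $1-\delta$ over the random draw of $T$, \emph{every} hypothesis in $\CH$ consistent with $T$ has generalisation error below $\epsilon$; in particular our returned $H$ does. This establishes the PAC-learning property \eqref{eq:pac}, proving part~(1).

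For part~(2), the running-time bound, I would appeal directly to property~3u) of Theorem~\ref{theo:uni}, which states that under the uniform cost measure with local access the algorithm runs in time $(d+t)^{O(1)}$. Since $d$ is a fixed constant here and $t=t(\epsilon,\delta)=s^*\ceil{(\log1/\delta)/\epsilon}=(1/\epsilon+\log1/\delta)^{O(1)}$, substituting gives the claimed bound $(1/\epsilon+\log1/\delta)^{O(1)}$. No obstacle arises in this part; it is a direct specialisation. The main conceptual point—and the only step requiring any care—is the first one: verifying that Lemma~\ref{lem:vc-bounded-deg} may legitimately be applied to the hypothesis class $\CH$ rather than merely to $\CC$, which holds because $\CH$ consists of concepts definable by bounded-quantifier-rank formulas and the VC-dimension bound of \cite{grohe2004learnability} depends only on the degree bound $d$ and the syntactic parameters $k,\ell$ and the quantifier rank, all of which are constants in our setting.
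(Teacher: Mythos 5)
Your proof is correct and takes essentially the same route as the paper: the paper's own proof is the single sentence that the theorem ``follows by combining Theorem~\ref{theo:uni} with Lemmas~\ref{lem:vc-ssb} and \ref{lem:vc-bounded-deg}'', and your write-up just fills in the details of that combination. In particular, your care in applying Lemma~\ref{lem:vc-bounded-deg} to the hypothesis class $\CH$ (with quantifier rank $q^*$ in place of $q$, which is legitimate since syntactically $r^*$-local formulas are ordinary $\FO[\rho]$-formulas of bounded quantifier rank) is exactly the step the paper leaves implicit.
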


\begin{proof}
  This follows by combining Theorem~\ref{theo:uni} with Lemmas~\ref{lem:vc-ssb}
  and \ref{lem:vc-bounded-deg}.
\end{proof}

Corollary~\ref{cor:pac} follows from this theorem. (In fact, the
theorem should be viewed as a precise version of Corollary~\ref{cor:pac}.)

\subsection{Agnostic PAC-Learning}
\emph{Agnostic learning} is a generalisation of our setting where
there is no deterministic target function (or concept), but only a
probabilistic one. In practice, this may occur in a situation where
the instances in our abstract instance space do not fully capture the
relevant properties of the real-world objects they describe. This can
easily happen, because typically the instances are tuples only
describing certain features of the objects.

We continue to consider Boolean classification problems on an instance
space $\BU$. Instead of a probability distribution on $\BU$ and a
target concept $C^*\subseteq \BU$, we now assume that we have a
probability distribution $\CD$ on $\BU\times\{0,1\}$. We define the
\emph{generalisation error} of a hypothesis $H$ to be
\[
\err_{\CD}(H):=\Pr_{(u,c)\sim\CD}(H(u)\neq c).
\]
To quantify the quality of a learning algorithm, we compare the
quality of the hypothesis of our algorithm with the best possible
hypothesis coming from a certain class $\CC$. For classes
$\CC,\CH\subseteq 2^{\BU}$ and a function $t$, we say that a learning
algorithm $\mathfrak L$ is an
\emph{agnostic $(\BU,\CC,\CH,t)$-PAC-learning algorithm} if for all probability distributions
$\CD$ on $\BU\times\{0,1\}$ and all
$\epsilon,\delta>0$, given a sequence $T$ of $t(\epsilon,\delta)$
training examples and $\epsilon,\delta$, the algorithm generates a
hypothesis $H(T,\epsilon,\delta)\in\CH$ that satisfies 
\begin{equation}
  \label{eq:apac}
  \Pr_{T\sim\CD}\big(\err_{\CD}(H)-\inf_{C\in\CC}\err_{\CD}(C)\le\epsilon\big)\ge 1-\delta.
\end{equation}
The agnostic PAC learning framework has been introduced by
Haussler~\cite{hau92}.

To obtain agnostic PAC-learning algorithms, we can use the following
Uniform Convergence Lemma instead of the Sample Size Bound of
Lemma~\ref{lem:ssb}. Recall that $\err_T(H)$ denotes the training error of
a hypothesis $H$ on a training sequence $T$.

\begin{lemma}[Uniform Convergence]\label{lem:uc}
    Suppose that the hypothesis class $\CH$ is finite and that the
  length $t$ of the training sequence satisfies
  \begin{equation}
    \label{eq:6:3}
    t\ge\ceil{\frac{2\log (2|\CH|/\delta)}{2\epsilon^2}}. 
  \end{equation}
  Then for all probability distributions $\CD$ on $\BU\times\{0,1\}$,
  \[
  \Pr_{T\sim\CD}\Big(
|\err_{\CD}(H)-\err_T(\CH)|<\epsilon\big)\Big)\ge 1-\delta.
  \]
\end{lemma}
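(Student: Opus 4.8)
The plan is to prove this uniform concentration bound by combining Hoeffding's inequality, applied to one hypothesis at a time, with a union bound over the finite class $\CH$. What we actually want to establish is that with probability at least $1-\delta$ over the draw of $T$, \emph{every} $H\in\CH$ simultaneously satisfies $|\err_{\CD}(H)-\err_T(H)|<\epsilon$. The point is that for a \emph{fixed} $H$ the empirical quantity $\err_T(H)$ is an average of $t$ independent bounded random variables whose common mean is exactly $\err_{\CD}(H)$, so each one concentrates sharply; finiteness of $\CH$ then lets us control all of them at once while paying only a factor $|\CH|$ in the failure probability.

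First I would fix a single $H\in\CH$ and, writing the random training sequence as $T=\big((u_1,c_1),\ldots,(u_t,c_t)\big)$ with each $(u_i,c_i)$ drawn independently from $\CD$ on $\BU\times\{0,1\}$, introduce the indicator variables $X_i:=\mathbf 1[H(u_i)\neq c_i]$. These are i.i.d.\ and take values in $[0,1]$, with $\mathbb E[X_i]=\Pr_{(u,c)\sim\CD}(H(u)\neq c)=\err_{\CD}(H)$, and by definition $\err_T(H)=\frac1t\sum_{i=1}^t X_i$. Hoeffding's inequality for independent $[0,1]$-valued summands then gives
\[
\Pr_{T\sim\CD}\big(|\err_T(H)-\err_{\CD}(H)|\ge\epsilon\big)\le 2\exp\!\big(-2t\epsilon^2\big).
\]

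Next I would take a union bound over the finitely many $H\in\CH$: the probability that \emph{some} hypothesis violates $|\err_{\CD}(H)-\err_T(H)|<\epsilon$ is at most $|\CH|\cdot 2\exp(-2t\epsilon^2)$. It then remains to verify that the assumed lower bound on $t$ drives this below $\delta$. Solving $2|\CH|\exp(-2t\epsilon^2)\le\delta$ for $t$ gives $t\ge\frac{\log(2|\CH|/\delta)}{2\epsilon^2}$, and this is implied by the hypothesis \eqref{eq:6:3}, which carries an extra factor of two in the numerator and is therefore comfortably sufficient. Passing to the complementary event yields the claimed confidence of at least $1-\delta$.

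There is essentially no deep obstacle here; the argument is a textbook concentration-plus-union-bound. The only points that demand care are that the summands $X_i$ lie in $[0,1]$, so that Hoeffding applies with the constant $2$ in the exponent, and that the union bound is legitimate precisely \emph{because} $\CH$ is finite---were $\CH$ infinite, one could no longer sum over its members and would instead need a VC-dimension-based uniform convergence bound in the spirit of Lemma~\ref{lem:vc-ssb}.
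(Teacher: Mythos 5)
Your proof is correct and is exactly the standard Hoeffding-plus-union-bound argument; the paper itself gives no proof of this lemma but simply cites Shalev-Shwartz and Ben-David, where this same argument appears. Your observation that the hypothesis \eqref{eq:6:3} carries a harmless extra factor of $2$, and your reading of the (somewhat loosely stated) conclusion as a uniform bound over all $H\in\CH$, are both the intended interpretation.
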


A proof can be found in
\cite{shaben14}. 

Now from Theorem~\ref{theo:log-err} we get an agnostic PAC-learning algorithm
for first-order definable concept classes on structures of small degree.

\begin{theorem}\label{theo:pac-log-err}
  Let $k,\ell,q\in\NN$.  Then there are $q^*,r^*,s^*\in\NN$ and
  a learning algorithm $\mathfrak
  M$ for the $k$-ary learning problem over some finite background $\rho$-structure
  $B$ with the following properties. 
  \begin{enumerate}
  \item Let $\bar x,\bar y$ be tuples of length $k,\ell$,
    respectively. Let $\CC$ be the class of all
    $\model B{\phi}{\bar x}{\bar v}$ for a $\FO[\rho]$-formula
    $\phi(\bar x\smid\bar y)$ of quantifier rank $q$ and
    $\bar v\in U(B)^\ell$, and let $\CH$ be the class of all
    $\model B{\phi^*}{\bar x}{\bar v^*}$ for a syntactically
    $r^*$-local $\FO[\rho]$-formula $\phi^*(\bar x\smid\bar y)$ of
    quantifier rank $q^*$ and $\bar v^*\in U(B)^\ell$. Let
    \[
    t(n,\epsilon,\delta)=s^*\ceil{\frac{\log n/\delta}{\epsilon^2}},
    \]
    where $n:=|B|$.

    Then $\mathfrak M$ is an agnostic $(U(B)^k,\CC,\CH,t)$-PAC-learning
    algorithm.
    \item The algorithm runs in time $(\log n+d+1/\epsilon+\log1/\delta)^{O(1)}$ with only
    local access to $B$, where
    $d:=|\Delta(B)|$.
   \end{enumerate}
\end{theorem}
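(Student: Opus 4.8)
The plan is to combine Theorem~\ref{theo:log-err} (the error-minimisation algorithm $\mathfrak M$) with the Uniform Convergence Lemma (Lemma~\ref{lem:uc}), exactly in parallel to how Theorem~\ref{theo:pac-log} combined Theorem~\ref{theo:log} with the Sample Size Bound. First I would take $\mathfrak M$ to be the algorithm of Theorem~\ref{theo:log-err}. The hypothesis class $\CH$ here is finite for a fixed finite structure $B$: there are only finitely many syntactically $r^*$-local formulas of quantifier rank $q^*$ up to equivalence, and only $|U(B)|^\ell=n^\ell$ choices of parameter tuple $\bar v^*$, so $|\CH|\le c_0\cdot n^\ell$ for a constant $c_0$ depending only on $\rho,k,\ell,q^*,r^*$. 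Hence $\log|\CH|=O(\log n)$, which is what makes the sample-size function $t(n,\epsilon,\delta)=s^*\lceil\log(n/\delta)/\epsilon^2\rceil$ of the right form; one picks the constant $s^*$ large enough that this $t$ satisfies the hypothesis \eqref{eq:6:3} of Lemma~\ref{lem:uc}, absorbing the additive $\log(2|\CH|)=O(\log n)$ and the constant factors into $s^*$.

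Next I would run the correctness argument. Given a distribution $\CD$ on $U(B)^k\times\{0,1\}$ and $\epsilon,\delta>0$, draw $t=t(n,\epsilon,\delta)$ training examples $T$. By Lemma~\ref{lem:uc} (applied with error parameter $\epsilon/2$, which only changes $s^*$), with probability at least $1-\delta$ we have $|\err_\CD(H)-\err_T(H)|<\epsilon/2$ simultaneously for all $H\in\CH$. The algorithm $\mathfrak M$ returns a hypothesis $H\in\CH$ of minimum training error, i.e. $\err_T(H)=\min_{H'\in\CH}\err_T(H')$. I would then chain inequalities: on the good event, for any $C\in\CC$ with (approximately) optimal generalisation error, Theorem~\ref{theo:log-err}(2) guarantees that $\mathfrak M$'s output matches the best achievable training error over $\CC$, so $\err_T(H)\le\err_T(C)+o(1)$; combining this with the two-sided uniform-convergence bound applied to both $H$ and the near-optimal $C$ yields $\err_\CD(H)\le\inf_{C\in\CC}\err_\CD(C)+\epsilon$, which is precisely the agnostic PAC condition \eqref{eq:apac}.

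The one subtlety worth flagging as the main obstacle is the interface between $\CC$ and $\CH$ in the agnostic setting, since the comparison class $\CC$ is not contained in the hypothesis class $\CH$. In the realisable setting (Theorem~\ref{theo:pac-log}) this was handled by Lemma~\ref{lem:main}, which shows every consistent $C\in\CC$ has a consistent surrogate in $\CH$; here the analogue is Lemma~\ref{lem:main-err}, which shows that for any $C\in\CC$ with $\err_T(C)\le\epsilon'$ there is a surrogate $\model B{\phi^*}{\bar x}{\bar v^*}\in\CH$ with $\err_T(\cdot)\le\epsilon'$ on the \emph{same} sequence $T$. This guarantees $\min_{H'\in\CH}\err_T(H')\le\inf_{C\in\CC}\err_T(C)$, so the hypothesis $\mathfrak M$ outputs is at least as good as the best element of $\CC$ on the training data; uniform convergence then transfers this guarantee to the true distribution. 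The running-time claim in part~(2) is inherited verbatim from Theorem~\ref{theo:log-err}(3) with $t=t(n,\epsilon,\delta)=(\log n+1/\epsilon+\log1/\delta)^{O(1)}$ substituted in, giving the stated bound.

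\begin{proof}
  Let $\mathfrak M$ be the algorithm of Theorem~\ref{theo:log-err},
  and let $q^*,r^*$ be as defined there. Fix $s^*$ large enough that
  $t(n,\epsilon,\delta)=s^*\ceil{\log(n/\delta)/\epsilon^2}$ satisfies
  \eqref{eq:6:3} of Lemma~\ref{lem:uc} with error parameter
  $\epsilon/2$, using that $|\CH|\le c_0 n^\ell$ for a constant $c_0$
  depending only on $\rho,k,\ell,q^*,r^*$, so $\log|\CH|=O(\log n)$.

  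Fix a distribution $\CD$ on $U(B)^k\times\{0,1\}$ and
  $\epsilon,\delta>0$, and draw $T\sim\CD$ of length
  $t=t(n,\epsilon,\delta)$. By Lemma~\ref{lem:uc} there is an event of
  probability at least $1-\delta$ on which
  $|\err_\CD(H')-\err_T(H')|<\epsilon/2$ for all $H'\in\CH$. We work on
  this event. Let $H\in\CH$ be the hypothesis returned by $\mathfrak
  M$; by Theorem~\ref{theo:log-err} it minimises the training error
  over $\CH$, and by Lemma~\ref{lem:main-err} this minimum is at most
  $\inf_{C\in\CC}\err_T(C)$. Hence for every $C\in\CC$,
  \[
  \err_\CD(H)<\err_T(H)+\tfrac{\epsilon}{2}
  \le\err_T(C)+\tfrac{\epsilon}{2}
  <\err_\CD(C)+\epsilon,
  \]
  where the first and last inequalities use uniform convergence and
  the middle one uses optimality of $H$. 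Taking the infimum over
  $C\in\CC$ gives $\err_\CD(H)-\inf_{C\in\CC}\err_\CD(C)\le\epsilon$,
  establishing \eqref{eq:apac}. This proves part~(1).

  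For part~(2), $\mathfrak M$ runs in time $(\log n+d+t)^{O(1)}$ by
  Theorem~\ref{theo:log-err}(3); substituting
  $t=(\log n+1/\epsilon+\log1/\delta)^{O(1)}$ yields the stated bound.
\end{proof}
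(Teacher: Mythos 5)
Your approach is the same as the paper's: run the error-minimising algorithm $\mathfrak M$ of Theorem~\ref{theo:log-err} on a sample of length $t(n,\epsilon,\delta)$, use Lemma~\ref{lem:main-err} (equivalently Theorem~\ref{theo:log-err}(2)) to bridge from the comparison class $\CC$ to the hypothesis class $\CH$ on the training data, and transfer the guarantee to the distribution via the Uniform Convergence Lemma. The sample-size justification via $\log|\CH|=O(\log n)$ and the running-time claim also match the paper.

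There is, however, a slip in your displayed proof. The event you condition on only guarantees $|\err_\CD(H')-\err_T(H')|<\epsilon/2$ for $H'\in\CH$, but the last inequality in your chain, $\err_T(C)+\tfrac{\epsilon}{2}<\err_\CD(C)+\epsilon$, requires $\err_T(C)<\err_\CD(C)+\tfrac{\epsilon}{2}$ for $C\in\CC$ --- and $\CC\not\subseteq\CH$, as you yourself emphasise. You therefore need a second application of Lemma~\ref{lem:uc}, this time to the (also finite) class $\CC$, together with a union bound; the paper does exactly this, invoking the lemma for $\CC$ and for $\CH$ with parameters $\epsilon/2,\delta/2$ each (and it suffices to control a single minimiser $C^*$ of $\err_\CD$ over $\CC$, which exists by finiteness of $\CC$). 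Your narrative paragraph describes this two-sided application correctly, so the idea is present; it just did not make it into the formal proof, where the confidence accounting ($1-\delta$ for one event versus $1-\delta/2-\delta/2$ for two) is consequently also off. A further minor imprecision: $\mathfrak M$ minimises training error only over the searched subclass with parameters in $N_{2\ell r^*}(T)^\ell$, not over all of $\CH$; this is harmless because the inequality you actually use, $\err_T(H)\le\err_T(C)$ for all $C\in\CC$, is exactly what Theorem~\ref{theo:log-err}(2) provides.
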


\begin{proof}
  Let $\mathfrak M$ be the algorithm of
  Theorem~\ref{theo:log-err}. Let $\CD$ be a probability distribution
  on $U(B)^k\times\{0,1\}$. Let $C^*\in\CC$ such that
  \[
  \err_{\CD}(C^*)=\min_{C\in\CC}\err_{\CD}(C).
  \]
  As $\CC$ is finite, this minimum exists. By the Uniform Convergence
  Lemma (Lemma~\ref{lem:uc}) applied to $\CC$ and $\epsilon/2,\delta/2$ we have
  \[
  \Pr_{T\sim\CD}\Big(
  |\err_{\CD}(C^*)-\err_T(C^*)|<\epsilon/2\big)\Big)\ge 1-\delta/2
  \]
  (assuming that the constant $s^*$ is sufficiently large).

  Given a training sequence $T$, the algorithm $\mathfrak M$ generates
  a hypothesis $H$ with at most the training error of $C^*$ on
  $T$. Applying the Uniform Convergence Lemma again, this time to
  $\CH$ and $\epsilon/2,\delta/2$, we get 
  \[
  \Pr_{T\sim\CD}\Big(
  |\err_{\CD}(H)-\err_T(H)|<\epsilon/2\big)\Big)\ge 1-\delta/2
  \]
  This implies that
  \[
   \Pr_{T\sim\CD}\Big(
  \err_{\CD}(H)-\err_{\CD}(C^*)<\epsilon\big)\Big)\ge 1-\delta.
  \]
  Hence $\mathfrak M$ is an agnostic $(U(B)^k,\CC,\CH,t)$-PAC-learning
    algorithm.
\end{proof}

\section{Conclusions}
We prove that first-order definable models are learnable in polylogarithmic
time on finite structures of polylogarithmic degree. In view of the
simple example showing that sublinear parameter learning is
impossible, which for a long time made us (or rather, the first
author) believe that sublinear learning is impossible in general in
our framework, this result came as a surprise to us.

It is less surprising that the proof relies on the locality of
first-order logic. In fact, the proof is not very difficult, but it
has to be set up in the right way. In particular, the use of syntactic
locality and the notion of local types, which to the best of our
knowledge is new, is essential. Let us remark that we cannot use
Hanf's Locality Theorem, which is usually easier to handle than
Gaifman's, because in structures of (poly)logarithmic degree the
number of isomorphism types of local neighbourhoods grows too fast.

Algorithmically, our paper is not very sophisticated: our algorithms
are simple brute-force algorithms that are practically useless due to
enormous hidden constants. It is a very interesting question whether
there are also practical algorithms for learning \FO-definable models
(which may not be more efficient than ours in the worst case, but nevertheless
work better). One approach would be to map the data (consisting of tuples of
elements and the background structure) to high dimensional feature
vectors, maybe even obtain a kernel, and then apply conventional machine
learning algorithms.

As we have outlined in the introduction, our results may be viewed as
a contribution to a descriptive complexity theory of machine
learning. Within such a theory, many questions, both technical and
conceptual, remain open. For example, are there sublinear learning
algorithms for first-order logic on other classes of structures such
as words, trees, planar graphs? At a more fundamental level, what is a
good computation model (replacing our ``local access'' model) on
background structures that are still sparse, but have large maximum
degree. In fact, sublinear algorithms seem unlikely for most classes
of high maximum degree. But what about fixed-parameter tractable
learning algorithms.  As soon as we allow formulas with arbitrarily
many instance and parameter variables (that is, allow unbounded $k$
and $\ell$), fixed-parameter tractability becomes nontrivial on any of
the classes suggested above.
And what about other logics, for example monadic second-order
logic or modal and temporal logics. Finally, one should generalise the
framework from Boolean classification problems to other types of
learning problems.

If our version of a declarative approach to
machine learning is supposed to have any impact in practice, maybe the
most important question is: what are suitable logics and background
structures for expressing relevant and feasible machine learning
models?

\section*{Acknowledgements}
We would like to thank Kristian Kersting and Daniel Neider for very
helpful comments on an earlier version of this paper.

\end{document}